\theoremstyle{plain}
\newtheorem{theorem}{Theorem}[section]
\newtheorem{proposition}[theorem]{Proposition}
\newtheorem{lemma}[theorem]{Lemma}
\theoremstyle{plain}
\newtheorem{definition}{\protect Definition}
\theoremstyle{remark}
\icmltitlerunning{Deferring Concept Bottleneck Models}
\def\eqref#1{equation~\ref{#1}}
\def\Eqref#1{Equation~\ref{#1}}
\def\1{\bm{1}}
\def\rvx{{\vec{x}}}
\DeclareMathAlphabet{\mathsfit}{\encodingdefault}{\sfdefault}{m}{sl}
\SetMathAlphabet{\mathsfit}{bold}{\encodingdefault}{\sfdefault}{bx}{n}
\newcommand{\DCBM}{\texttt{DCBM}}
\newcommand{\DCBMNT}{\texttt{DCBM-NT}}
\newcommand{\DCBMNC}{\texttt{DCBM-NC}}
\newcommand{\DBB}{\texttt{DBB}}
\newcommand{\BB}{\texttt{BB}}
\newcommand{\CBM}{\texttt{CBM}}
\DeclareMathOperator*{\argmax}{arg\,max}
\DeclareMathOperator*{\argmin}{arg\,min}
\newcounter{issuecount}
\newcounter{todocount}
\NewDocumentCommand{\inlinetodo}{o m}{%
    \stepcounter{todocount}%
    \textcolor{orange}{%
        \textbf{{Todo}
            \arabic{todocount}%
            \IfNoValueTF{#1}{.}{ (#1).}}} {\textcolor{orange!80!white}{#2}}
}
\newcounter{notecount}
\NewDocumentCommand{\note}{o m}{%
    \stepcounter{notecount}%
    \par\noindent%
    \textcolor{blue}{%
        \textbf{{Note}
            \arabic{notecount}%
            \IfNoValueTF{#1}{.}{ (#1).}}} {\textcolor{blue!80!white}{#2}}
}
\renewcommand{\vec}[1]{\ensuremath\bm{{#1}}}
\newcommand{\subscm}[3][]{
    \ifthenelse{\equal{#1}{}}
    {\ensuremath{\mathcal{#2}\left[#3\right]}}
    {\ensuremath{\mathcal{#2}^{#1}\left[#3\right]}}
}
\newcommand{\real}{\ensuremath\mathbb{R}}
\newcommand{\set}[1]{\ensuremath\bm{{#1}}}
\newcommand{\setdef}[1]{\ensuremath\left\{{#1}\right\}}
\newcommand{\dom}[1]{\ensuremath\mathcal{D} (#1)}
\begin{document}

\twocolumn[
\icmltitle{%
Deferring Concept Bottleneck Models:\\
Learning to Defer Interventions to Inaccurate Experts
}



\icmlsetsymbol{equal}{*}

\begin{icmlauthorlist}
\icmlauthor{Andrea Pugnana}{1}
\icmlauthor{Riccardo Massidda}{1}
\icmlauthor{Francesco Giannini}{2}
\icmlauthor{Pietro Barbiero}{3}
\icmlauthor{Mateo Espinosa Zarlenga}{4}
\icmlauthor{Roberto Pellungrini}{2}
\icmlauthor{Gabriele Dominici}{3}
\icmlauthor{Fosca Giannotti}{2}
\icmlauthor{Davide Bacciu}{1}
\end{icmlauthorlist}

\icmlaffiliation{1}{University of Pisa}
\icmlaffiliation{2}{Scuola Normale Superiore}
\icmlaffiliation{3}{Universita della Svizzera Italiana}
\icmlaffiliation{4}{University of Cambridge}

\icmlcorrespondingauthor{Andrea Pugnana}{andrea.pugnana@di.unipi.it}

\icmlkeywords{Machine Learning, ICML}

\vskip 0.3in
]



\printAffiliationsAndNotice{}  

\begin{abstract}
Concept Bottleneck Models (CBMs) are machine learning models that improve interpretability by grounding their predictions on human-understandable concepts, allowing for targeted interventions in their decision-making process. However, when intervened on, CBMs assume the availability of humans that can identify the need to intervene and always provide correct interventions. Both assumptions are unrealistic and impractical, considering labor costs and human error-proneness. In contrast, Learning to Defer (L2D) extends supervised learning by allowing machine learning models to identify cases where a human is more likely to be correct than the model, thus leading to deferring systems with improved performance. In this work, we gain inspiration from L2D and propose Deferring CBMs (DCBMs), a novel framework that allows CBMs to learn when an intervention is needed. To this end, we model DCBMs as a composition of deferring systems and derive a consistent L2D loss to train them. Moreover, by relying on a CBM architecture, DCBMs can explain why defer occurs on the final task. Our results show that DCBMs achieve high predictive performance and interpretability at the cost of deferring more to humans.
\end{abstract}

\section{Introduction}%
\label{sec:introduction}

\begin{figure*}[t]
\centering
\includegraphics[width=0.98\textwidth]{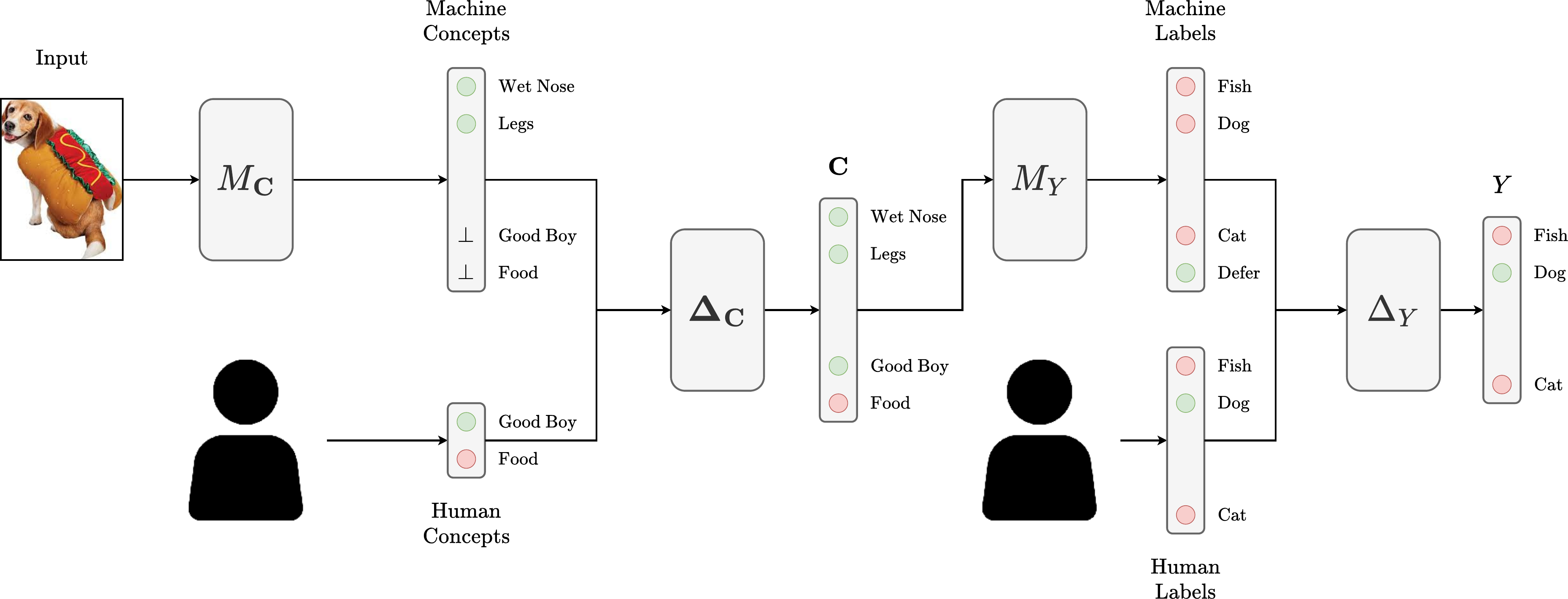}
\caption{%
    A DCBM:
        Given an input, the concept predictors $M_{\set{C}}$ output either a concept's value or defer its prediction to a human (i.e., they predict $\bot$). Next, the deferring system $\Delta_{\set{C}}$ outputs the human labels \emph{only} on the deferred concepts, returning the system's predictions otherwise. The same applies to the final task, where the task classifier $M_Y$ is an input of a dedicated deferring system~$\Delta_Y$. DCBMs can be trained by considering the cost of deferring, thus regulating the expected number of human deferrals.
}
\label{fig:visual_abstract}
\end{figure*}

Concept Bottleneck Models (CBMs) \cite{koh2020concept} are a family of interpretable machine learning (ML) models that incorporate human-interpretable \textit{concepts} 
as part of their training and predictive process.
At test time, CBMs enable experts to correct any of their concepts' values, potentially triggering a change to the CBM's task prediction.
This fosters a collaborative interaction between humans and AI systems, where a CBM may improve its accuracy when deployed with the support of an expert. 
%
However, CBMs suffer from a few shortcomings: first, increasing interpretability often comes at the expense of predictive accuracy, leading to an interpretability-accuracy trade-off~\cite{DBLP:conf/nips/ZarlengaBCMGDSP22}; second, CBMs often assume that their set of concepts can fully predict the final task (i.e., they are \textit{complete}~\citep{yeh2020completeness}); third, CBMs assume that human interventions are \textit{infallible}, an over-simplification that does not reflect the real world where human experts may introduce errors, be unaware of their own potential weaknesses, and have a specific sub-expertise~\citep{DBLP:journals/pacmhci/RastogiZWVDT22}. These practical limitations muddle the effects and dynamics of the human-AI collaboration expected when using CBMs.

To address the complex dynamics of human-in-the-loop interactions, Learning to Defer (L2D) has been introduced as an extension of supervised learning~\citep{madras2018predict,DBLP:conf/nips/OkatiDG21,DBLP:conf/icml/MozannarS20}. In L2D, ML models can delegate challenging instances to human experts, enhancing human-AI team collaboration and outperforming both the ML models and the human experts~\citep{DBLP:conf/aistats/MozannarLWSDS23}.
Notably, conventional L2D approaches have been applied to single-classification tasks and are typically opaque, providing little insight into the reasons for deferring decisions.


In this work, we introduce \emph{Deferring Concept Bottleneck Models} (DCBMs), a novel class of models enabling learning to defer on CBMs (\Cref{fig:visual_abstract}). A key advantage of DCBMs is their ability to effectively learn when a concept or task prediction could benefit from human intervention.  To the best of our knowledge, DCBM represents the first interpretable-by-design deferring system, enabling more transparent human-AI collaborations. 
Moreover, resorting to L2D, DCBMs introduce another variable to the accuracy-interpretability trade-off, i.e. the so-called \textit{coverage}, which measures the percentage of times the ML model provides the prediction. Indeed, by allowing DCBMs to defer difficult cases to the human, one can achieve high accuracy and interpretability at the cost of deferring more to the human.
%
Summarizing, our contributions are the following:
\begin{enumerate}[itemsep=-3pt,leftmargin=*]
    \item%
    We introduce the Deferring Concept Bottleneck Model, a new CBM capable of deferring on both its intermediate concepts and final task predictions (\cref{subsec:dcbm}).
    \item%
    We propose a new deferral-aware loss for CBMs (\Cref{subsec:maxlike}) and prove that it is a consistent surrogate loss w.r.t.\ the intractable zero-one loss on the deferral procedure (\Cref{subsec:consistency}).
    \item
    We experimentally show how DCBMs react to varying costs and different human-accuracy degrees for defer~(\Cref{sec:experiments}). Moreover,  deferring on CBMs can significantly improve concept-incomplete tasks.
    \item
    Finally, we demonstrate how DCBMs can produce concept-based explanations for their final task deferrals by exploiting their interpretable-by-design architecture.
\end{enumerate}

The rest of the paper is organized as follows. In \Cref{sec:background}, we introduce the background on CBMs and L2D.  Then, in \Cref{sec:method}, we propose DCBMs and prove that their loss function is consistent with existing L2D formalizations. Next, in \Cref{sec:experiments}, we report an empirical analysis highlighting the advantages of DCBMs. Finally, we discuss related works in \Cref{sec:related} and summarize our work in \Cref{sec:conclusion}.

\section{Background}\label{sec:background}

Given a variable $V$, we denote its domain as $\dom{V}$, and its realization as $v\in\dom{V}$. Similarly, we use bold for sets of variables and their multi-variate realizations.

\textbf{Concept Bottleneck Models.}
Concept-based models are interpretable architectures that explain their predictions using high-level units of information known as ``concepts''~\cite{kim2018interpretability, chen2020concept,DBLP:conf/icml/KimJPKY23,barbiero2023interpretable,DBLP:conf/iclr/OikarinenDNW23}. Most of these approaches can be formulated as a Concept Bottleneck Model (CBM)~\citep{koh2020concept}, an architecture where predictions are made by composing (i) a \textit{concept encoder} $g: \dom{\set{X}} \to \dom{\set{C}}$ that maps samples $\vec{x} \in \dom{\set{X}} \subseteq \mathbb{R}^d$ (e.g., pixels) to a set of $n_c$ concepts $\vec{c} \in \dom{\set{C}} = \{0,1\}^{n_c}$ (e.g., ``\texttt{red}'', ``\texttt{round}''), and (ii) a \textit{task predictor} $f: \dom{\set{C}} \to \dom{\set{Y}}$ that maps predicted concepts to a set of $n_y$ tasks $\vec{y} \in \dom{\set{Y}} = \{0,1\}^{n_y}$ (e.g., ``\texttt{apple}'', ``\texttt{pear}'').

CBMs can be trained (a) \textit{independently}, where $g$ and $f$ are trained separately and later combined; (b) \textit{sequentially}, where $g$ is trained first, and its output is used to train $f$; or (c) \textit{jointly}, where $g$ and $f$ are trained together.
All of these training paradigms operate under the assumption that the training concept labels $\vec{c}$ are  \textit{complete}, meaning they are sufficient to predict the tasks $\vec{y}$~\citep{yeh2020completeness}.


\textbf{Learning to Defer.} 
Learning to Defer (L2D)~\citep{madras2018predict}
combines a human expert's knowledge, 
modeled as a \textit{given}
and
non-trainable
predictor
$h\colon\dom{\set{X}}\to\dom{\set{Y}}$, together with a \textit{learnable}
classifier~$m\colon\dom{\set{X}}\to\dom{\set{Y}}$.
In combination with a selector function $s \colon \dom{\set{X}} \to \{0,1\}$, which decides whether to trust the classifier's prediction (i.e., $s(\vec{x})=0$) or the human's prediction (i.e., $s(\vec{x})=1$), these functions form a \textit{deferring system} $\Delta: \dom{\set{X}} \to \dom{\set{Y}}$
\[
    \Delta(\vec{x}) =
    \begin{cases}
        m(\vec{x}) &\text{if }s(\vec{x})=0\\
        h({\vec{x}}) &\text{otherwise.}
    \end{cases}
\]

A deferring system is a human-AI team specifying who should predict
between the human and the ML model.
We stress here that
the human predictions might \emph{differ}
from the ground-truth label,
and thus trivially deferring
each instance might not be optimal.
According to \citet{DBLP:conf/icml/MozannarS20}, L2D can be formalized as a risk minimization problem of the following zero-one loss:
\begin{equation}
\label{eq:L2D_min}
\begin{split}
\min_{m \in \mathcal{M},\,s \in \mathcal{S}} \mathbb{E}_{\vec{x}, y, h\sim\mathbb{P}(\mathbf{x}, y, h)}[\mathbb{I}_{\{s(\vec{x})=0\}}\mathbb{I}_{\{m(\vec{x}) \neq y\}} \\+ \mathbb{I}_{\{s(\vec{x})=1\}}\mathbb{I}_{\{h \neq y\}}],
\end{split}
\end{equation}
where $\mathbb{P}(\mathbf{x}, y, h)$ is the distribution over (input, output, human-predictions) triplets,
and $\mathcal{M}$ and $\mathcal{S}$ are the hypothesis spaces for the model~$m$ and the selector function~$s$.

Since directly optimizing \Cref{eq:L2D_min} is intractable, many \emph{consistent surrogate losses}\footnote{Let $\ell$ and $\ell'$ be two loss functions. $\ell'$ is a consistent surrogate of $\ell$ whenever $\argmin \ell' \subseteq \argmin \ell$.} have been proposed~\citep{DBLP:conf/icml/MozannarS20,DBLP:conf/icml/VermaN22,DBLP:conf/nips/CaoM0W023,DBLP:conf/icml/CharusaieMSS22} to train single-task classifiers over $|\set{Y}|+1$ classes, where the additional class, denoted as $\bot$, stands for the \textit{deferral decision}. Hence, we equivalently refer to deferring systems as pairs $\Delta=(m, h)$, where $\Delta(\vec{x})=m(\vec{x})$ if $m(\vec{x})\neq\bot$ and $\Delta(\vec{x})=h(\vec{x})$ otherwise.


\section{Deferring Concept Bottleneck Models}%
\label{sec:method}


The section is organized as follows. We first formulate the exact learning problem for DCBMs (\Cref{subsec:dcbm}). Then we introduce the surrogate loss and we show how it can be derived as the maximum likelihood of our graphical model (\Cref{subsec:maxlike}). 
Finally, we prove that our formulation results in a valid consistent loss for the L2D problem (\Cref{subsec:consistency}), and we discuss how the loss' consistency can be ensured, while efficiently training  DCBMs (\Cref{subsec:pretrain}).

\subsection{Model Formulation}%
\label{subsec:dcbm}

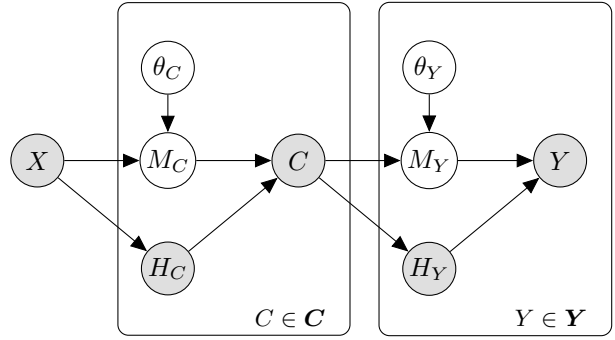
\begin{figure}
\centering
\begin{tikzpicture}[->]
     \node[obs] (x) {$X$};
     \node[latent,right=of x] (ModelCi) {$M_{C}$}; %
     \node[latent,above=of ModelCi, yshift=-0.5cm] (ThetaCi) {$\theta_{C}$}; %
     \node[obs,below=of ModelCi, yshift=0.3cm] (HumanCi) {$H_{C}$}; %
     \node[obs,right=of ModelCi] (Ci) {${C}$}; %
     \node[latent,right=of Ci] (ModelYj) {$M_{Y}$}; %
     \node[latent,above=of ModelYj, yshift=-0.5cm] (ThetaYj) {$\theta_{Y}$}; %
     \node[obs,below=of ModelYj, yshift=0.3cm] (HumanYj) {$H_{Y}$}; %
     \node[obs,right=of ModelYj] (Yj) {${Y}$}; %
     \plate [inner sep=.3cm,xshift=.02cm,yshift=.2cm] {plate1} {(ModelCi) (ThetaCi) (HumanCi) (Ci)} {$C\in\set{C}$}; %
     \plate [inner sep=.3cm,xshift=.02cm,yshift=.2cm] {plate1} {(ModelYj) (ThetaYj) (HumanYj) (Yj)} {$Y\in\set{Y}$}; %
     \edge {x,ThetaCi} {ModelCi}
     \edge {x} {HumanCi}
     \edge {Ci} {HumanYj}
     \edge {ModelCi,HumanCi} {Ci}
     \edge {Ci,ThetaYj} {ModelYj}
     \edge {ModelYj,HumanYj} {Yj}
\end{tikzpicture}
 \caption{%
   A DCBM is a Bayesian Network where inputs~$\set{X}$, concepts~$\set{C}$, tasks~$\set{Y}$, and human labels~$\set{H}$ are observed variables (in \textcolor{gray}{gray}). As represented by the plate notation~\citep{koller2009probabilistic}, we assign a human expert and a latent model to each variable.
   We incorporate the deferral decision in the model through a dedicated output, denoted as $M=\bot$. Here, we learn each model $M_V$'s parameters  $\theta_V$ via maximum likelihood.
}\label{fig:cbndefer}
\end{figure}

As in CBMs, we consider the problem of predicting both concept variables that directly depend on the input and task variables that are conditionally independent of the input given the concepts. We define DCBMs as an extension of CBMs, where each concept variable $C \in \set{C}$ and each task variable $Y \in \set{Y}$ is dealt with as a separate deferring system. Similar to CBMs, a DCBM can be framed as a probabilistic graphical model, with the difference that both concepts and tasks depend on human predictions when the model defers (Figure~\ref{fig:cbndefer}).

Let the set of concepts and task variables be $\set{V}=\set{C} \cup \set{Y}$. We assign an expert~$H_V$ and a model~$M_V \colon \dom{\set{Z}_V} \to [n_V] \cup \setdef{\bot}$ to each variable $V \in \set{V}$. Here, the output space consists of $n_V + 1$ classes, including the deferral choice~$\bot$, and either $\set{Z}_V = \set{X}$, if $V \in \set{C}$, or $\set{Z}_V = \set{C}$, if $V\in\set{Y}$. Similarly, we denote the ground-truth output as $k_V$, which is to be intended as the label of either a concept or a task.

In contrast to traditional L2D setups, in a DCBM we need to train a model composed of several deferring systems, one for each concept and task variable. Hence, our objective would ideally minimize the number of mistakes made by all the deferring systems. This can be expressed through the following multi-variate zero-one loss, where we account for the cost of each deferral via a hyperparameter $\lambda \in [0, 1]$:
\begin{definition}[Multivariate Zero-One Loss]
\label{def:multivar_loss}
    Given a set of deferring systems $\set{\Delta} = \setdef{\Delta_V =(m_V, h_V)}_{V\in\set{V}}$
    parameterized by a set of parameters $\set{\theta}=\setdef{\theta_V}_{V\in\set{V}}$
    over a set of variables $\set{V}$ 
    , we define the multivariate zero-one loss as
    \begin{equation}\label{eq:zeroone}
    \begin{split}
         \sum_{V\in\set{V}}
         &\mathbb{I}_{\{m_V(\vec{z}_V)\neq\bot\}}
         \mathbb{I}_{\{m_V(\vec{z}_V) \neq k_V\}}
         \\+
         &\mathbb{I}_{\{m_V(\vec{z}_V)=\bot\}}
         (\lambda + 
         \mathbb{I}_{\{h_V\neq k_V\}}
         )
         \ ,
    \end{split}
    \end{equation}
    where $\vec{z}_V$ and $k_V$ are the realizations of $\Delta_V$'s inputs and outputs, respectively.
\end{definition}

\subsection{Maximum Likelihood and Surrogate Loss}
\label{subsec:maxlike}
By deriving the negative log-likelihood from our probabilistic formulation of DCBMs (\Cref{fig:cbndefer}), 
we can treat the maximum likelihood estimation of the parameters as a minimization problem. In this way, we obtain a loss function composed of two terms. Intuitively, the first term directly rewards the classifier for predicting the ground-truth class, while the second term rewards the model for deferring whenever the human prediction is correct. 
\begin{proposition}[%
Maximum Likelihood of DCBM]
\label{prop:maxlikeDCBM}
    Let $\set{\theta}$ be the parameters
    of a DCBM. Then,
    we can obtain the most
    likely parameters $\hat{\theta}$
    given observations on the inputs~$\vec{x}$,
    the concepts~$\vec{c}$, the human~$\vec{h}$,
    and the task~$\vec{y}$,
    by minimizing the following loss function:
    \begin{equation}
    \begin{split}
    \ell(\set{\theta} \mid \vec{x},\vec{c}, \vec{y}, \vec{h})
    &=
    \sum_{V \in\set{V}}\Big(
    \Psi\big(q(\vec{z}_V; \theta_V), k_V\big)\\
    &\quad+
    \mathbb{I}_{\left\{y_V = h_V\right\}}
    \Psi\big(q(\vec{z}_V; \theta_V), \bot\big)\Big)\\
    \end{split}
    \label{eq:psiloss}
    \end{equation}
    where $q(\hspace{0.3em}\cdot\hspace{0.3em};\theta_V)\colon\dom{\set{Z}_V}\to\real^{K_V+1}$
    returns the logits of the model~$M_V$
    given $\vec{z}_V\in\dom{\set{Z}_V}$
    and $\Psi(q(\vec{z}_V;\theta_V), k)$
    is the negative log-probability of the class~$k$ given the logits.
\end{proposition}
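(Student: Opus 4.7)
The plan is to derive the additive loss in equation~(\ref{eq:psiloss}) as the negative log-likelihood of the Bayesian network in Figure~\ref{fig:cbndefer}. I would proceed in three steps: exploit the plate-level factorization to split the MLE into a sum over $V \in \set{V}$, analyze a single per-variable factor by marginalizing the latent action $M_V$, and then pass to the negative log to recover the two claimed terms.

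First, since each concept $C \in \set{C}$ is conditionally independent of the rest given $\vec{x}$, $\theta_C$, $H_C$, and each task $Y \in \set{Y}$ is conditionally independent given $\vec{c}$, $\theta_Y$, $H_Y$, the joint likelihood factors as $P(\vec{c}, \vec{y}, \vec{h} \mid \vec{x}, \set{\theta}) = \prod_{V \in \set{V}} P(k_V, h_V \mid \vec{z}_V, \theta_V)$, where $\vec{z}_V$ collects the observed parents ($\vec{x}$ for concepts, $\vec{c}$ for tasks). The NLL therefore decomposes as a sum over $V$, matching the outer sum in equation~(\ref{eq:psiloss}). Second, I would treat $M_V \in [n_V] \cup \{\bot\}$ as a latent action drawn from $\softmax(q(\vec{z}_V;\theta_V))$; the deterministic rule $V = M_V$ if $M_V \neq \bot$ and $V = H_V$ otherwise implies that the observation $(V=k_V, H_V=h_V)$ is consistent with exactly two latent events: $M_V = k_V$ (the model predicts correctly), and, only when $h_V = k_V$, also $M_V = \bot$ (the model defers to a correct human).

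Third, viewing these two consistent explanations as independent evidential channels into $\theta_V$---each contributing its own likelihood factor whenever it is compatible with the data---the per-variable log-likelihood becomes $\log P(M_V = k_V \mid \vec{z}_V, \theta_V) + \mathbb{I}_{\{h_V = k_V\}} \log P(M_V = \bot \mid \vec{z}_V, \theta_V)$, which after negation is exactly equation~(\ref{eq:psiloss}). The main obstacle I anticipate is precisely this last passage: a naive marginalization of $M_V$ would yield the single mixture term $-\log\bigl(P(M_V=k_V) + \mathbb{I}_{\{h_V=k_V\}}P(M_V=\bot)\bigr)$, which is \emph{not} equal to the additive expression in the proposition. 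Closing this gap requires the standard ``correct-action'' interpretation used in Mozannar--Sontag-style L2D surrogates: the training target is the entire set of $M_V$-assignments compatible with $(k_V, h_V)$, and each compatible assignment is taken to contribute an independent multiplicative factor to the likelihood. Arguing that this decomposition is the actual MLE for the DAG in Figure~\ref{fig:cbndefer}, rather than only a tractable upper bound on it, is the delicate piece of the proof; once that interpretation is in place, the remainder is routine bookkeeping over the softmax parametrization and the plate indexing.
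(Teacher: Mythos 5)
Your derivation tracks the paper's own proof almost exactly up to the decisive point: the paper also factorizes the likelihood variable-by-variable using the Bayesian network, marginalizes the latent $M_V$ over its $n_V+1$ values, and uses the deterministic deferral rule to reduce the inner sum to the mixture term $p(M_V = k_V \mid \vec{z}_V,\theta_V) + \mathbb{I}_{\{h_V = k_V\}}\, p(M_V = \bot \mid \vec{z}_V,\theta_V)$. You are also right that this is where the real difficulty sits: the exact negative log-likelihood is $-\sum_V \log\bigl(p(M_V=k_V) + \mathbb{I}_{\{h_V=k_V\}} p(M_V=\bot)\bigr)$, which is not the additive expression in Equation~\ref{eq:psiloss}. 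Identifying that discrepancy explicitly is the strongest part of your proposal.

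However, your proposed resolution does not close the gap. Treating the two compatible latent assignments as ``independent evidential channels,'' each contributing its own multiplicative likelihood factor, is not something you can derive from the graphical model of Figure~\ref{fig:cbndefer}: $M_V$ takes a single value per sample, so the exact marginal likelihood is the sum over compatible assignments, not their product, and asserting the product form amounts to assuming the conclusion. The paper does not derive an exact equality here either; it passes from the log-of-sum to the sum-of-logs by invoking the pointwise inequality
\begin{equation}
\log\bigl(p(M_V=k_V \mid \vec{z}_V,\theta_V) + \mathbb{I}_{\{h_V=k_V\}}\, p(M_V=\bot \mid \vec{z}_V,\theta_V)\bigr)
\;\geq\;
\log p(M_V=k_V \mid \vec{z}_V,\theta_V) + \mathbb{I}_{\{h_V=k_V\}} \log p(M_V=\bot \mid \vec{z}_V,\theta_V),
\end{equation}
valid because probabilities lie in $[0,1]$, so that the loss of Equation~\ref{eq:psiloss} is an upper bound on the exact negative log-likelihood (a lower bound on the log-likelihood, in the spirit of an ELBO-type surrogate), and maximization is carried out on that bound. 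If you want a complete proof, you should either adopt this bound-based justification explicitly (and state that one is optimizing a surrogate of the exact marginal likelihood, connecting it to the known L2D losses of Mozannar and Sontag), or supply an argument for when the bound is tight at the optimum; the ``independent channels'' interpretation as you state it would fail as a maximum-likelihood derivation.
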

\begin{proof}
    We report the proof in \Cref{app:proofloss}.
\end{proof}

When $\Psi$ corresponds to the cross-entropy formulation
\[
\Psi(q(\vec{z}), k) = -\log\left(\frac{\exp(q(\vec{z})_k)}{\sum_{k'\in[K+1]}\exp(q(\vec{z})_{k'})}\right),
\]
the negative log-likelihood in Equation~\ref{eq:psiloss} corresponds to the sum over all variables $V\in\set{V}$ of a known function from the L2D literature~\citep[Eq. 6]{DBLP:conf/icml/MozannarS20}. In this way, we first establish a connection between the maximum likelihood problem and the learning to defer task that, to the best of our knowledge, has not been previously identified in the literature. Depending on the chosen parameterization of $\Psi$ (see \Cref{tab:liu} in \Cref{app:constrained} for some alternatives), the same formulation 
can derive several known L2D losses.

The negative log-likelihood in Equation~\ref{eq:psiloss} of the DCBM does not take into account the cost of deferring. In this way, in scenarios where the human has a significant advantage, we can expect the model to underfit and almost always defer to the human~\citep{DBLP:conf/aistats/MozannarLWSDS23}. To overcome these limitations, we define a penalized loss function by constraining the parameters of the model to enforce two additional conditions: (1) the model should not always defer when the human is correct, and (2) when the human is not correct, the model should not defer. We report the formalization of the constrained optimization problem in \Cref{app:constrained}, and hereby report the resulting penalized loss function,
\begin{equation}
\begin{split}
\ell(\set{\theta} \mid \vec{x},\vec{c},& \vec{y}, \vec{h})
\\
&=\sum_{V \in\set{V}}
\Psi(q(\vec{z}_V;\theta_V), v)\\
&+
(1-\lambda) \cdot \mathbb{I}_{\{y_V = h_V\}}
\Psi(q(\vec{z}_V; \theta_V), \bot)\\
&+
\lambda \cdot \mathbb{I}_{\{y_V \neq h_V\}}
\sum_{k\in[K]}
\Psi(q(\vec{z}_V; \theta_V), k),
\end{split}
\label{eq:psilosspenalized}
\end{equation}
where $\lambda\in[0,1]$ is the hyperparameter
that controls the trade-off between the deferring
to the human and the machine learning model.
Even in this case, the resulting penalized loss in \Cref{eq:psilosspenalized} coincides with the sum over variables of a known single-variate learning to defer loss function~\citep{DBLP:conf/aistats/LiuCZF024}. 

\subsection{Loss Consistency}%
\label{subsec:consistency}

The multivariate scenario exacerbates the fact that always deferring to a human might not be the right choice, as the human's feedback may be incorrect or costly. Therefore, to ensure that our model effectively defers to the human only when needed, we have to prove that the cost-free loss (Equation~\ref{eq:psiloss}) and the penalized loss (Equation~\ref{eq:psilosspenalized}) of a DCBM are consistent surrogates of the multivariate zero-one loss (Equation~\ref{eq:zeroone}). We prove this by first showing that the sum of consistent losses on deferring systems with different parameters is a consistent loss for the whole system.
\begin{lemma}
\label{lemma:sumconsistent}
Let $\ell^\prime_1, \ell_1, \cdots,\ell^\prime_m, \ell_m$  be (possibly distinct) loss functions. Assume that, for every $i\in\{1,\ldots,m\}$, $\ell^\prime_i,\ell_i:\mathbb{R}^{n_i}\to\mathbb{R}$, being $\ell'_i$ a consistent surrogate of $\ell_i$.
Then $\ell^\prime:\mathbb{R}^{n}\to\mathbb{R}$, with $n=n_1+\ldots+n_m$ and 
$\ell'(\theta_1,\ldots,\theta_m)=\sum_{i=1}^m\ell^\prime_i(\theta_i)$
     is a consistent surrogate of
$\ell:\mathbb{R}^{n}\to\mathbb{R}$, with $\ell(\theta_1,\ldots,\theta_m)=\sum_{i=1}^m\ell(\theta_i)$.
\end{lemma}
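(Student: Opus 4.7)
The plan is to exploit the fact that both $\ell'$ and $\ell$ are \emph{separable} across the disjoint parameter blocks $\theta_1,\ldots,\theta_m$: each $\theta_i$ appears in exactly one summand on each side. This reduces the multivariate consistency claim to applying the single-variate consistency of each pair $(\ell'_i,\ell_i)$ coordinate-wise.

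Concretely, I would proceed in three short steps. First, I would record the separability observation: for any $\theta=(\theta_1,\ldots,\theta_m)\in\mathbb{R}^n$,
\[
\ell'(\theta)=\sum_{i=1}^m \ell'_i(\theta_i), \qquad \ell(\theta)=\sum_{i=1}^m \ell_i(\theta_i),
\]
and since the blocks $\theta_i$ are disjoint coordinates, infimizing the sums decomposes. Hence
\[
\argmin_{\theta}\ell'(\theta)=\prod_{i=1}^m \argmin_{\theta_i}\ell'_i(\theta_i),
\]
and the analogous identity holds for $\ell$.

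Second, I would take any $\theta^\star=(\theta_1^\star,\ldots,\theta_m^\star)\in\argmin \ell'$. By the product characterization above, $\theta_i^\star\in\argmin \ell'_i$ for every $i$. By the hypothesis that each $\ell'_i$ is a consistent surrogate of $\ell_i$, we have $\argmin \ell'_i\subseteq \argmin \ell_i$, so $\theta_i^\star\in\argmin \ell_i$ for every $i$. Third, applying the product characterization in the opposite direction yields $\theta^\star\in\argmin \ell$, which gives $\argmin \ell'\subseteq \argmin \ell$, i.e., $\ell'$ is a consistent surrogate of $\ell$.

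There is no real obstacle here beyond being explicit about the separability step, which is the only place the structural assumption that the $\theta_i$ are disjoint blocks of coordinates gets used; this assumption is implicit in the statement since $\ell'_i$ and $\ell_i$ are each defined on their own $\mathbb{R}^{n_i}$ with $n=\sum n_i$. If one wanted to allow $\argmin$ to be empty, the inclusion $\argmin \ell'\subseteq \argmin \ell$ is still vacuously true, so no extra existence hypothesis is needed. This is the argument I will use to conclude that the DCBM loss in \Cref{eq:psiloss} (and the penalized variant \Cref{eq:psilosspenalized}), being a sum over $V\in\set{V}$ of the known single-variate consistent surrogates from \citet{DBLP:conf/icml/MozannarS20} and \citet{DBLP:conf/aistats/LiuCZF024}, is consistent with respect to the multivariate zero-one loss of \Cref{def:multivar_loss}.
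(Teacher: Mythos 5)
Your proposal is correct and follows essentially the same route as the paper: the product characterization of the $\argmin$ of a separable sum over disjoint parameter blocks is exactly the paper's auxiliary lemma (proved there for two functions and applied with the empty-minimizer case handled explicitly), after which coordinate-wise application of the single-variate consistency gives the inclusion $\argmin \ell' \subseteq \argmin \ell$. No gaps; your remark that emptiness of $\argmin \ell'$ makes the inclusion vacuous matches the paper's treatment of the no-minimum case.
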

%
\begin{proof}
    We report the proof in \Cref{app:proofsum}.
\end{proof}
\begin{theorem}
\label{theo:main}
    The cost-free loss in \Cref{eq:psiloss} and the DCBM penalized loss in \Cref{eq:psilosspenalized} are surrogate consistent losses of the multivariate zero-one loss of \Cref{eq:zeroone} when $\set{V}=\set{C}\cup\set{Y}$, and $\lambda=0$ and $\lambda>0$, respectively.
\end{theorem}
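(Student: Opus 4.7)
The plan is to decompose both the multivariate zero-one loss (\Cref{eq:zeroone}) and the proposed surrogates (\Cref{eq:psiloss,eq:psilosspenalized}) into sums indexed by $V \in \set{V}$ whose summands depend on disjoint parameter blocks $\theta_V$, and then invoke \Cref{lemma:sumconsistent} after verifying per-variable consistency against established single-variate L2D results. First, I would observe that all three losses are already written as sums over $V \in \set{V} = \set{C} \cup \set{Y}$, with each summand depending only on $\theta_V$ through the logits $q(\,\cdot\,;\theta_V)$ of the variable's classifier $M_V$, and with input $\vec{z}_V = \vec{x}$ for $V \in \set{C}$ and $\vec{z}_V = \vec{c}$ for $V \in \set{Y}$. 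Hence both the full parameter vector and the losses split into independent blocks that match the hypothesis of \Cref{lemma:sumconsistent}.

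Next, I would identify each per-variable summand with a known consistent L2D surrogate. For $\lambda=0$, the per-variable term of \Cref{eq:psiloss} instantiated with the cross-entropy $\Psi$ coincides, as already noted in the paper after \Cref{prop:maxlikeDCBM}, with the Mozannar--Sontag surrogate of \citet{DBLP:conf/icml/MozannarS20}, which is Bayes-consistent for the cost-free single-variate L2D zero-one loss; moreover, the per-variable summand of \Cref{eq:zeroone} at $\lambda=0$ is precisely that single-variate zero-one loss (on the classifier $m_V$ paired with the expert $h_V$). For $\lambda > 0$, the per-variable term of \Cref{eq:psilosspenalized} coincides with the penalized surrogate of \citet{DBLP:conf/aistats/LiuCZF024}, which is consistent for the corresponding cost-aware single-variate L2D zero-one loss, and this loss matches the $\lambda>0$ per-variable summand of \Cref{eq:zeroone}. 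In either regime, the per-variable consistency hypothesis of \Cref{lemma:sumconsistent} holds for every $V \in \set{V}$. Applying the lemma with $m = |\set{V}|$ then yields that the cost-free DCBM loss is a consistent surrogate of the $\lambda=0$ multivariate zero-one loss, and that the penalized DCBM loss is a consistent surrogate of the $\lambda>0$ multivariate zero-one loss, which is exactly the statement of the theorem.

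The main obstacle I expect is carefully checking that the per-variable risks in a DCBM genuinely reduce to single-variate L2D risks. The subtle point is that, for $V \in \set{Y}$, the classifier $M_V$ is trained on the ground-truth concept vector $\vec{c}$ as dictated by the maximum-likelihood derivation of \Cref{prop:maxlikeDCBM}, even though at inference time it will receive the upstream deferring systems' outputs; it is precisely this training-time factorization that lets the per-variable problem be written as a clean univariate L2D task over the marginal $(\vec{c}, y_V, h_{Y_V})$ distribution, so the cited consistency results apply verbatim without worrying about error propagation across the graph. A secondary care point is that the cited theorems require sufficiently expressive hypothesis spaces for each $\theta_V$; under that standard realizability assumption, the argmin-containment definition of consistency used in the paper's footnote transfers cleanly from each summand to the full sum through \Cref{lemma:sumconsistent}, closing the argument.
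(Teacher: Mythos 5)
Your proposal is correct and follows essentially the same route as the paper's own proof: decompose the surrogate and target losses into per-variable summands over disjoint parameter blocks, verify each summand's consistency by identifying it with the single-variate results of \citet{DBLP:conf/icml/MozannarS20} (cost-free case) and \citet{DBLP:conf/aistats/LiuCZF024} (penalized case), and conclude via \Cref{lemma:sumconsistent}. Your additional remarks on the training-time factorization over ground-truth concepts and on hypothesis-space expressiveness are sensible elaborations but do not change the argument.
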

\begin{proof}
    We report the proof in Appendix~\ref{app:proofmain}.
\end{proof}

Hence, minimizing our introduced surrogate losses, under appropriate assumptions,
(whose practicalities are discussed in the following subsection), 
corresponds to minimizing the zero-one loss associated to the exact problem. 

\subsection{Consistent Training of DCBMs}%
\label{subsec:pretrain}

\Cref{theo:main} ensures the consistency of our overall formulation under specific assumptions on the loss functions to be summed together: they should depend on disjoint sets of parameters. Hence from a practical point of view, there are two main requirements to ensure its consistency while training a DCBM.  
First, the model has to be trained \emph{independently}, so that no information flows from the tasks' losses to the concepts' losses. 
Notably, independent training of CBMs is known to slightly decrease the performance compared to \emph{jointly} training CBMs~\cite{koh2020concept}. However, independent training avoids the problem of concept leakage~\cite{mahinpei2021promises, havasi2022addressing}, inherent to jointly trained models, thus maintaining the interpretability of the outcomes. For this reason, we focus on independently trained models here and discuss additional experiments on jointly trained models, showing similar results to those seen for their independent counterparts, in \cref{app:additionalresults}.
%

Our second requirement is that the concept predictors should not share their parameters in the DCBM's architecture, and similarly for the task predictors.
Parameter sharing is common in CBMs, especially for computer vision tasks~\citep{DBLP:conf/nips/ZarlengaBCMGDSP22}, where an encoder produces a latent representation from the input space that is then fed to the concept predictors. To enable this in applications where parameter sharing is beneficial, we take the following two-step approach: first, we train an encoder to predict either all the concepts or the final task from the input features. Then, we freeze this encoder, discard the learned predictors, and independently train the concept predictors on the encoder's latent representation and the task predictor on the concepts using our consistent L2D loss.
Still, for completeness' sake, we evaluate DCBMs when they share parameters across classifiers in \Cref{app:additionalresults}.

\section{Experimental Evaluation}%
\label{sec:experiments}

Our experiments aim to answer these research questions\footnote{%
We provide the code for reproducing our empirical analysis at~\textcolor{blue}{\url{https://anonymous.4open.science/r/DCBM-03E2/README.md}}.
}:
\begin{itemize}
    \item [\textbf{Q1}]%
    Does the option to defer improve the performance of independently trained CBM-based approaches? 
    \item [\textbf{Q2}]%
    Does the option to defer mitigate the lack of completeness of a set of concepts for task prediction accuracy?
    \item [\textbf{Q3}]%
    How does human competence affect deferring?
    \item [\textbf{Q4}]%
    Can concepts explain \textit{why} a final task was deferred?

\end{itemize}

\subsection{Experimental Settings}

\textbf{Datasets.}
We perform our analysis on three datasets: \texttt{completeness} \citep{laguna2024beyond}, \texttt{cifar10-h}~\citep{Peterson2019HumanUM} and \texttt{CUB}~\citep{cub}.

The \texttt{completeness} dataset is a synthetic dataset that allows full control of the data-generation process. Here, we simulate labels from human experts with different competencies by selecting the concepts' or task's correct labels with different probabilities. In particular, we denote as \texttt{oracle}, \texttt{human-80\%} and \texttt{human-60\%}, a human that correctly predicts their labels with an accuracy of 100\%, 80\% and 60\%, respectively.
The \texttt{cifar10-h} dataset is a modified version of the \texttt{cifar10} dataset \cite{cifar10} containing 10,000 images with both ground-truth and human-annotated labels. 
We adapted it for our scenario by adding annotated concepts. We employed the 16 ``superclass'' concepts defined by \citet{DBLP:conf/iclr/OikarinenDNW23} for each class. As human annotations are missing for the concepts, we treat humans as oracles on the concepts.
%
Finally, \texttt{CUB} is a dataset commonly used for image classification with CBMs. We consider the complete set of 112 concepts used by \citet{koh2020concept}. Since the dataset reports annotator uncertainty on the concepts, we use them to produce random human concept labels as done by~\citet{human_uncertainty_cbms}.
In the \texttt{CUB} task label, however, we treat humans as oracles.

\textbf{Baselines and Variants.}
We compare a complete DCBM architecture (\DCBM) with some ablated variants and baselines. In particular, we consider the following baselines:
(i) a black-box model trained with standard supervised learning on the final task only (\BB);
(ii) a black-box model that can defer on the final task only (\DBB); 
(iii) a standard CBM without the deferring option (\CBM). 
To evaluate the effect of deferring on concepts and tasks of \DCBM{}, we also compare with the following ablations: a DCBM that can not defer on the final task (\DCBMNT) and a DCBM that can not defer on the concepts (\DCBMNC). In all the datasets, we train the models using the state-of-the-art Asymmetric Softmax~\citep[ASM]{DBLP:conf/nips/CaoM0W023} parameterization of the negative-log-likelihood in our loss functions. We provide further details on the adopted architectures and the experimental setup in \Cref{app:exp_det}. We report in \Cref{app:additionalresults} results for other losses on the \texttt{completeness} dataset.

\textbf{Metrics.}
Given a sample of size $N$, we consider four main metrics: the accuracy on the final task $AccTask$, the accuracy on concepts $AccConc$, the final task's coverage $CovTask = \frac{1}{N} \sum_{i=1}^{N} \mathbb{I}_{\{m_Y(\vec{c})\neq\bot\}}$, which measures how many times the ML model provides the task prediction, and the average concepts coverage $CovConc= \frac{1}{n_c}\sum_{c=1}^{n_c}\left(\frac{1}{N} \sum_{i=1}^{N} \mathbb{I}_{\{m_c(\vec{x})\neq\bot\}}\right)$, which measures how many times the ML model provides concept predictions.

\subsection{Experimental Results}%
\label{subsec:expres}

\begin{figure*}[t]
    \centering
    \includegraphics[width=\linewidth]{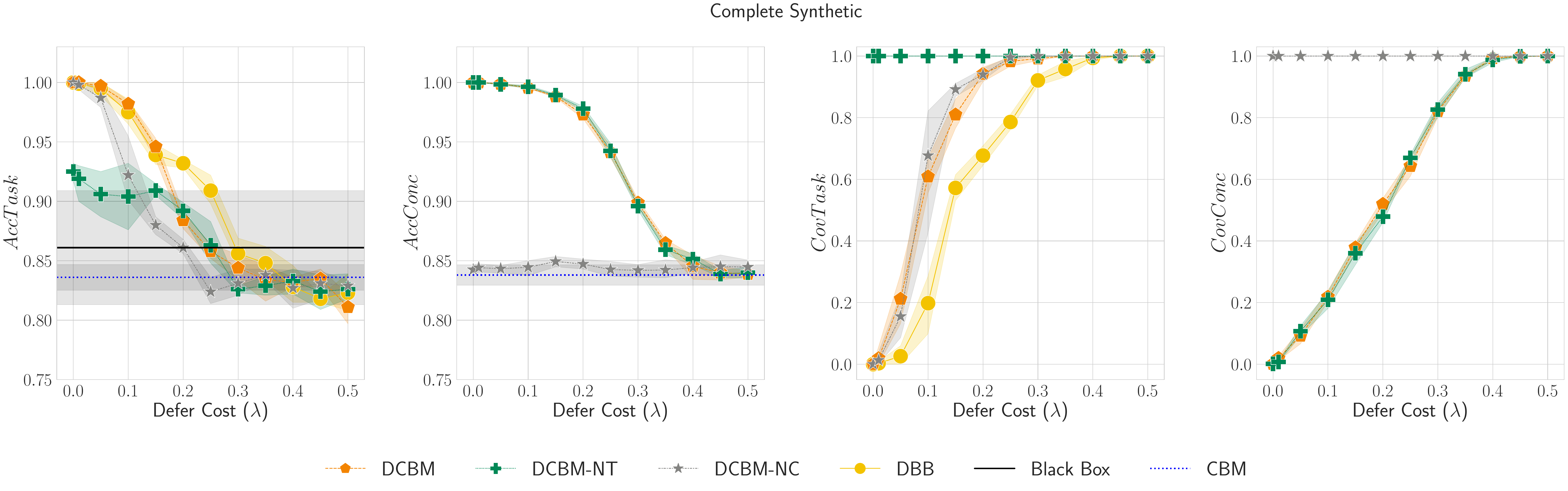}
    \caption{%
    Results on \texttt{completeness} when human experts have perfect concept and task accuracy (i.e., they are oracles).
    We report each metric's average and standard deviations as we increase the defer cost~$\lambda$.
    The black box and the CBM baselines are constant as they are independent of the defer cost.
    }
    \label{fig:COMPLETEres}
\end{figure*}

\begin{figure*}[t]
    \centering
    \includegraphics[width=\linewidth]{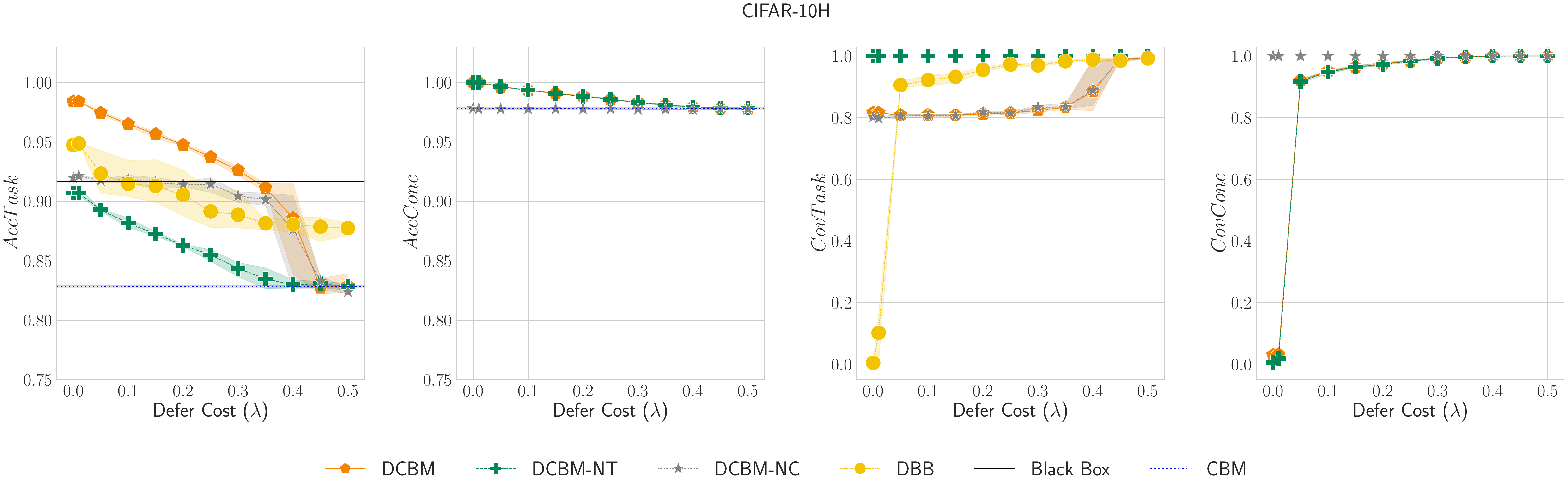}
    \caption{
    Results on \texttt{cifar10-h}  when human experts have perfect accuracy on the final task but not on the concepts.
    We report each metric's average and standard deviation as we increase the defer costs~$\lambda$.
    The black box and the CBM baselines are constant as they are independent of the defer cost.
    }
    \label{fig:CIFAR10Hres}
\end{figure*}

\begin{figure*}[t]
    \centering
    \includegraphics[width=\linewidth]{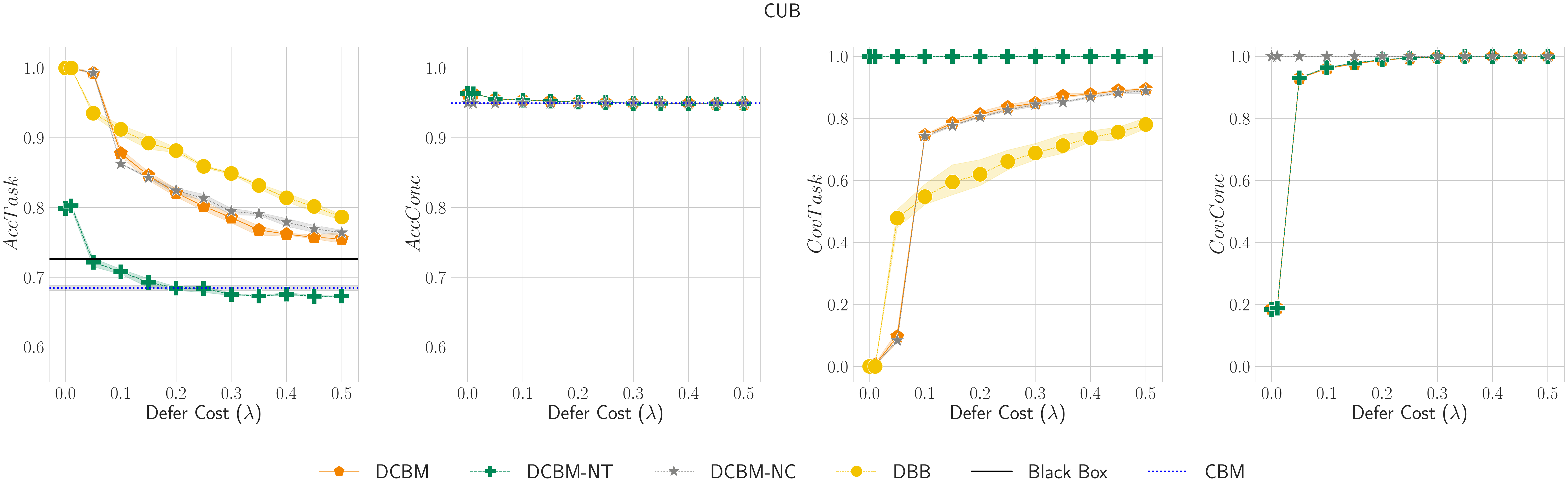}
    \caption{
    Results on \texttt{CUB} dataset when human experts have perfect accuracy on the concepts but not on the final task. We report each metric's average and standard deviation as we increase the defer costs~$\lambda$.
    The black box and the CBM baselines are constant as they are independent of the deferring cost.
    }
    \label{fig:CUBres}
\end{figure*}

\textbf{Q1: CBM Improvements by Deferring.}
We study the performance improvement on CBMs in a controlled environment by employing the synthetic \texttt{completeness} data (\Cref{fig:COMPLETEres}). When the defer costs are significantly low ($\lambda < 0.05$), the deferring system tends to over-rely on the human and thus the coverage of the machine learning model is zero ($CovTask=0, AccTask=1$). As expected, increasing the defer cost increases both $CovTask$ and $CovConc$. At the same time, it also reduces the accuracy of the prediction, which is, however, still \emph{over} the standard \CBM{} baseline without deferring capabilities. In summary, the performance of the ablated (\DCBMNC{}, \DCBMNT{}) and the full model (\DCBM{}) tend to those of the standard non-deferring \CBM{} for higher defer costs while improving performance for lower defer costs.

Notably, deferring only on the concepts (\DCBMNT{}) or on the task (\DCBMNC{}) is not sufficient to reach the accuracy on the task of the black-box with deferring baseline (\DBB{}) at parity of defer cost. 
Both \DCBMNC{} and \DCBM{} have a similar trend in terms of $AccTask$ for $\lambda \simeq 0$, whereas only \DCBM{}'s $AccTask$ is comparable with the one of \DBB{} for $\lambda \in [0.5, 0.15]$. However, it is worth remarking that the black box loses on the interpretability of the decision, which instead DCBMs and CBMs guarantee.

Finally, \DCBM{} does not over-rely on the human expert as increasing the defer cost~$\lambda$ effectively controls the coverage: \DCBM{}'s $CovTask$ ranges from approximately $0.301 \pm 0.128$ at $\lambda=0.05$ to approximately $0.833 \pm 0.02$ at $\lambda=0.15$, whereas \DBB{}'s $CovTask$ is $0.06\pm$ at $\lambda=0.05$, reaching only $0.55 \pm 0.148$ at $\lambda=0.15$. 

\textbf{Q2: Addressing  Incompleteness through Human-AI collaboration.}
We investigate the effect of deferring in a concept incomplete dataset through the \texttt{cifar10-h} dataset
(\Cref{fig:CIFAR10Hres}).
Concepts in this task cannot fully describe the downstream task labels:
For instance, the concepts do not allow to disentangle the \texttt{deer} class from the \texttt{cat} class. Hence, we expect deferring on the final task to be beneficial, allowing us to disentangle these cases.
The results validate our hypothesis by showing that for low defer costs, the \DCBM{} outperforms other baselines, with an $AccTask$ of $\approx0.984\pm0.001$ at $\lambda=0$ and a $CovTask$ of $\approx 0.816\pm0.000$, outperforming even \DBB{} ($AccTask\approx0.947\pm0.001$). Instead, when considering \DCBMNC{} and \DCBMNT{}, they 
reach an $AccTask$ of $\approx 0.919\pm0.005$ and $\approx 0.907\pm0.003$, respectively, thus showing a significant gap w.r.t. \DCBM{}.

However, when increasing the defer costs, defer is no longer viable, and we converge to the plain-vanilla CBM performance: $AccTask$ of \DCBM{} steadily drops until $\approx0.829\pm0.009$ at $\lambda=0.5$. Similar trends occur for all competitors, with \DBB{} achieving (the highest) $AccTask$ of $\approx0.878\pm0.005$ at $\lambda=0.5$.
These results highlight once again the trade-off between the cost of querying a human and the performance of \DCBM{}.

\textbf{Q3: Evaluating Potentially Wrong Humans.}
On the \texttt{CUB} dataset, we test the effect of deferring to human experts with different levels of accuracy on the concepts and being oracles on the tasks (\Cref{fig:CUBres}).
For low defer costs $(\lambda\leq 0.05)$, we notice that $AccTask\approx1$, as both \DCBM{} and \DCBMNC{} over-rely on the human expert ($CovTask$ is close to zero for $\lambda\leq 0.05$). 
Once we increase $\lambda$, we do not see significant gaps between \DCBM{} and \DCBMNC{} performance in terms of $AccTask$. There are two reasons for this behavior: first, $CovTask$ increases steadily due to the increase in defer costs: for \DCBM{} the metric ranges from 
$0.746 \pm 0.004$ 
at $\lambda=0.10$ 
to 
$0.889\pm0.009$ at $\lambda=0.50$.
Second, we notice that $AccConc$ plateaus around approximately $0.949$, negatively affecting the final task performance.

Such a behavior emphasizes how the human expert's ability to correctly predict concepts affects the final task's performance: in the presence of potentially incorrect humans on the concepts, there is no advantage in deferring concept predictions to humans, making \DCBM{} and \DCBMNC{} equivalent in terms of performance. Overall, a \DCBM{} can automatically adjust the coverage depending not only on the defer cost but on the competence of the human expert.

\begin{figure*}
    \centering
    \includegraphics[width=\textwidth]{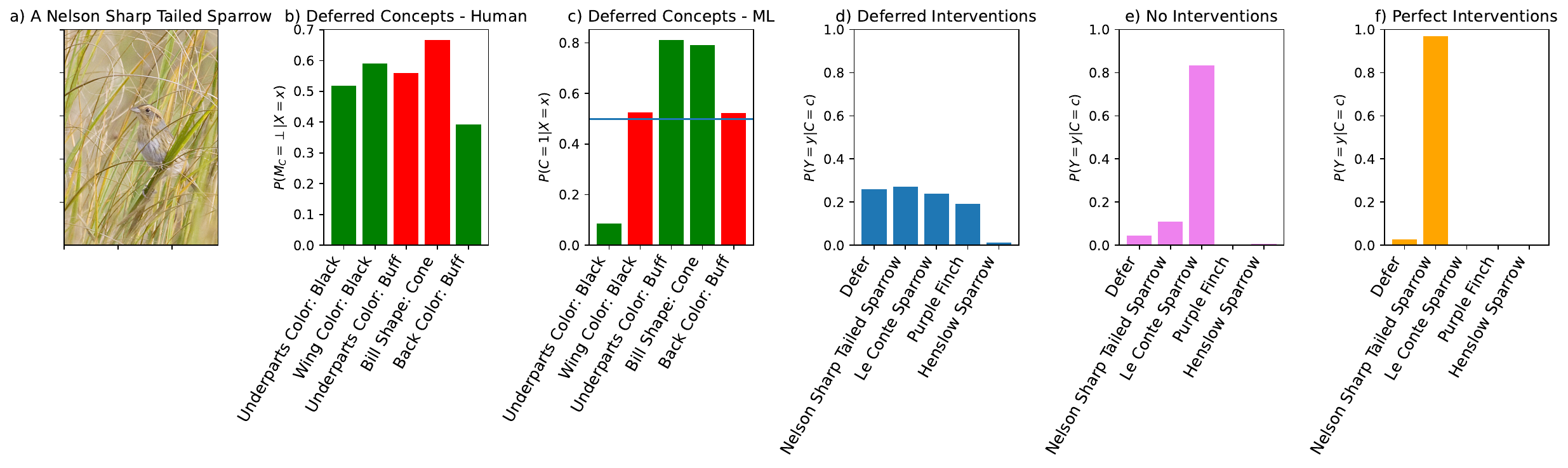}
    \caption{Interpretation of a \DCBM{} with defer cost $\lambda=0.1$ on an input sample.
    From left to right:
    (a) an example of an image from the \texttt{CUB} dataset;
    (b) the concepts that the model has deferred with the estimated probability, green bars stand for when the human correctly predicts the concept, red otherwise;
    (c) the estimated probability of each deferred concept being true according to the machine learning model, green bars stand for when the ML would have correctly predicted the concept, red otherwise;
    (d) the estimated probability of top-5 final task labels after deferring the concepts to the human (standard \DCBM{} behavior);
    (e) the estimated probability of top-5 final task labels without deferring the concepts to the human;
    (f) the estimated probability of top-5 final task labels from the ground-truth concepts;
    }%
    \label{fig:explanations}
\end{figure*}

\textbf{Q4: Interpretable Learning to Defer.}
We show how \DCBM{} can help interpret the reasons for the final task deferring, by following a similar approach to \citet{espinosa2024learning}. For this purpose, we consider the \texttt{CUB} dataset and a DCBM trained with cost $\lambda=0.1$.  In \cref{fig:explanations}, we show an example with the image of a Nelson-Sharp Tailed Sparrow. We provide other examples in \Cref{app:interpretations}.

A main advantage of \DCBM{} is its capability to identify which concepts should be corrected by a human intervention. Since the human supervision is not perfectly accurate in the \texttt{CUB} dataset, the interventions might not correspond to the ground-truth values (\cref{fig:explanations}b).
Interestingly, the deferred concepts are not easy to grasp in the original image: the underparts are partially covered by grass blades, the bill is not clearly cone-shaped, and the back is not really visible in the image.
In general, highlighting the particular concepts on which the human should be a safer option than the ML model favours the interpretation of the classifier. Moreover, we also stress that without deferring to the human, the machine learning model would have wrongly classified some deferred concepts (\cref{fig:explanations}c).

This example also stresses that \DCBM{} tries to defer only when worthy and necessary, without involving humans when they are likely to make mistakes. Moreover, interventions allows us to reason on how concepts effectively lead the \DCBM{} to predict the correct class (\cref{fig:explanations}d). In fact, without intervening on the concepts, the final task model would have predicted a wrong label (\cref{fig:explanations}e). Finally, we also report how the final task model would have behaved under perfect interventions, i.e., where a human has access to ground-truth concept labels, as assumed by the standard CBM literature (\cref{fig:explanations}f). As expected, the \DCBM{} would increase its own confidence in the correct label, at the cost of human supervision on the all of the 112 concepts instead of only the five identified by the deferring system.

\section{Related Works}\label{sec:related}

\textbf{Deferring systems.}
L2D, as introduced in~\citet{madras2018predict}, is an instance of hybrid decision-making where humans oversee machines. 
Since directly optimizing \Eqref{eq:L2D_min} is NP-hard even in simple settings~\citep{DBLP:conf/aistats/MozannarLWSDS23}, \citet{DBLP:conf/icml/MozannarS20} proposed consistent surrogate losses, which have since become the standard approach for jointly learning the deferral policy and the ML predictor~\citep{DBLP:conf/icml/CharusaieMSS22,DBLP:conf/icml/VermaN22,DBLP:conf/aistats/MozannarLWSDS23,DBLP:conf/nips/CaoM0W023,DBLP:conf/aistats/LiuCZF024,DBLP:conf/icml/WeiC024}.
A formal characterization of humans in the loop is provided by \citet{DBLP:conf/nips/OkatiDG21}. 
Recent works extend the L2D problem to account for multiple human experts, e.g.,~see~\citet{DBLP:conf/aistats/VermaBN23,mao2024two} and \citet{DBLP:conf/nips/CaoM0W023}, cases where the ML model is already given and not jointly trained, e.g.,
~\citep{DBLP:conf/icml/CharusaieMSS22} and 
\citep{mao2024two},
and how to evaluate them in a causal framework~\citep{DBLP:journals/corr/abs-2405-18902}.

\textbf{Concept Interventions.}
CBMs have seen a growth of interest in the context of \textit{concept interventions}, operations that improve a CBM's overall task performance in the presence of test-time human feedback.
Works in this area have explored (1) how to best select which concepts to intervene on next when interventions are costly~\citep{DBLP:conf/icml/ShinJAL23, coop}, (2) how to improve a model's receptiveness to interventions and learn an \textit{intervention policy}~\citep{espinosa2024learning}, and (3) how to intervene on otherwise black-box models~\citep{black_box_intervenables}.
Other approaches have exploited inter-concept relationships to propagate single-concept interventions~\citep{stochastic_cbms, inter_concept_relationships, causalconceptgraphmodels} and have used interventions as sources of continual learning labels~\citep{DBLP:conf/icml/SteinmannSFK24}.
Finally, \citet{DBLP:conf/nips/ShethK23} and \citet{human_uncertainty_cbms} both discuss notions of supervisor uncertainty, where we may be interested in modeling errors from an expert performing interventions. Nevertheless, works on concept interventions fundamentally differ from our L2D-based approach in that they assume that experts themselves trigger a correction in a model's concept predictions.
This makes it difficult for these approaches to adapt to expert-specific competencies and to be easily deployed in practice where it is desirable to know \textit{when} a human should be called to intervene.

\section{Conclusions and Future Work}%
\label{sec:conclusion}

This paper introduces DCBM, a novel approach that allows CBMs to defer to a human without additional supervision. By training the CBM with an especially designed learning to defer loss function, a DCBM can implicitly model the predictive distribution of the human, and thus defer only on instances where the expert is more likely to be correct of the machine learning model. Moreover, we formally proved the consistency of our deferring loss function for independent training of CBMs. 
Our experimental results highlight that DCBMs effectively learn when to involve a human, boosting overall predictive performance only when the human is better than the ML model. Moreover, directly involving a human helps mitigate cases where concepts are incomplete. Finally, the interpretable by-design nature of DCBMs offers ways to audit the deferring systems, showing promising results in explaining their limits.

We think that integrating CBMs with L2D have several benefits, and we envision multiple research directions for future work. First, in this paper, we consider single experts and assume that the expert's costs are the same when deferring at the concept and task level. Such an assumption might be impractical, as correcting concepts are supposed to be easier to provide than the prediction for the final task. Investigating how different costs might lead to different DCBMs is left for future work. Second, our theoretical approach considers concepts as independent variables. However, such an approach might not correctly model the relationships among concepts. Extending our approach to account for a hierarchical structure of concepts is also an open research direction.
Regarding the explanation of deferring systems, CBMs are  interpetebale models that can be used to enlighten the decision process toward either a class or a defer prediction. This paper used defer on concepts and interventions to provide explanations on task's defer. However, as several explanations' methodologies have been considered for CBMs, like the ones based on logic rules, we could extend these methods to DCBMs to explain how the predictions on the concepts have caused the defer to a human on tasks.
Finally, developing consistent learning-to-defer losses for jointly training concepts and task predictors is an open issue that needs further investigation.
\section*{Broader Impact}
Our work extends CBMs to account for potentially wrong humans. This enables one to model human behavior better, making it more suitable for real-life deployment of such systems. Moreover, the interpretable-by-design nature of CBMs positively affects the interaction between the human and the AI system, actively fostering the diffusion of such systems in high-risk, high-stakes settings.

\section*{Acknowledgements}
This work has been partially supported by the Partnership Extended PE00000013 - “FAIR - Future Artificial Intelligence Research” - Spoke 1 “Human-centered AI”. This work was also funded by the European Union under Grant Agreement no. 101120763 - TANGO.

\bibliography{bibliography}

\begin{thebibliography}{42}
\providecommand{\natexlab}[1]{#1}
\providecommand{\url}[1]{\texttt{#1}}
\expandafter\ifx\csname urlstyle\endcsname\relax
  \providecommand{\doi}[1]{doi: #1}\else
  \providecommand{\doi}{doi: \begingroup \urlstyle{rm}\Url}\fi

\bibitem[Barbiero et~al.(2023)Barbiero, Ciravegna, Giannini, Zarlenga, Magister, Tonda, Li{\'o}, Precioso, Jamnik, and Marra]{barbiero2023interpretable}
Barbiero, P., Ciravegna, G., Giannini, F., Zarlenga, M.~E., Magister, L.~C., Tonda, A., Li{\'o}, P., Precioso, F., Jamnik, M., and Marra, G.
\newblock Interpretable neural-symbolic concept reasoning.
\newblock In \emph{International Conference on Machine Learning}, pp.\  1801--1825. PMLR, 2023.

\bibitem[Cao et~al.(2023)Cao, Mozannar, Feng, Wei, and An]{DBLP:conf/nips/CaoM0W023}
Cao, Y., Mozannar, H., Feng, L., Wei, H., and An, B.
\newblock In defense of softmax parametrization for calibrated and consistent learning to defer.
\newblock In \emph{NeurIPS}, 2023.

\bibitem[Charusaie et~al.(2022)Charusaie, Mozannar, Sontag, and Samadi]{DBLP:conf/icml/CharusaieMSS22}
Charusaie, M., Mozannar, H., Sontag, D.~A., and Samadi, S.
\newblock Sample efficient learning of predictors that complement humans.
\newblock In \emph{{ICML}}, volume 162 of \emph{Proceedings of Machine Learning Research}, pp.\  2972--3005. {PMLR}, 2022.

\bibitem[Chauhan et~al.(2023)Chauhan, Tiwari, Freyberg, Shenoy, and Dvijotham]{coop}
Chauhan, K., Tiwari, R., Freyberg, J., Shenoy, P., and Dvijotham, K.
\newblock Interactive concept bottleneck models.
\newblock In \emph{Proceedings of the AAAI Conference on Artificial Intelligence}, volume~37, pp.\  5948--5955, 2023.

\bibitem[Chen et~al.(2020)Chen, Bei, and Rudin]{chen2020concept}
Chen, Z., Bei, Y., and Rudin, C.
\newblock Concept whitening for interpretable image recognition.
\newblock \emph{Nature Machine Intelligence}, 2\penalty0 (12):\penalty0 772--782, 2020.

\bibitem[Collins et~al.(2023)Collins, Barker, Espinosa~Zarlenga, Raman, Bhatt, Jamnik, Sucholutsky, Weller, and Dvijotham]{human_uncertainty_cbms}
Collins, K.~M., Barker, M., Espinosa~Zarlenga, M., Raman, N., Bhatt, U., Jamnik, M., Sucholutsky, I., Weller, A., and Dvijotham, K.
\newblock Human uncertainty in concept-based ai systems.
\newblock In \emph{Proceedings of the 2023 AAAI/ACM Conference on AI, Ethics, and Society}, pp.\  869--889, 2023.

\bibitem[Dominici et~al.(2024)Dominici, Barbiero, Zarlenga, Termine, Gjoreski, Marra, and Langheinrich]{causalconceptgraphmodels}
Dominici, G., Barbiero, P., Zarlenga, M.~E., Termine, A., Gjoreski, M., Marra, G., and Langheinrich, M.
\newblock Causal concept graph models: Beyond causal opacity in deep learning, 2024.
\newblock URL \url{https://arxiv.org/abs/2405.16507}.

\bibitem[Espinosa~Zarlenga et~al.(2024)Espinosa~Zarlenga, Collins, Dvijotham, Weller, Shams, and Jamnik]{espinosa2024learning}
Espinosa~Zarlenga, M., Collins, K., Dvijotham, K., Weller, A., Shams, Z., and Jamnik, M.
\newblock Learning to receive help: Intervention-aware concept embedding models.
\newblock \emph{Advances in Neural Information Processing Systems}, 36, 2024.

\bibitem[Havasi et~al.(2022)Havasi, Parbhoo, and Doshi-Velez]{havasi2022addressing}
Havasi, M., Parbhoo, S., and Doshi-Velez, F.
\newblock Addressing leakage in concept bottleneck models.
\newblock \emph{Advances in Neural Information Processing Systems}, 35:\penalty0 23386--23397, 2022.

\bibitem[He et~al.(2016)He, Zhang, Ren, and Sun]{DBLP:conf/cvpr/HeZRS16}
He, K., Zhang, X., Ren, S., and Sun, J.
\newblock Deep residual learning for image recognition.
\newblock In \emph{{CVPR}}, pp.\  770--778. {IEEE} Computer Society, 2016.

\bibitem[Kim et~al.(2018)Kim, Wattenberg, Gilmer, Cai, Wexler, Viegas, et~al.]{kim2018interpretability}
Kim, B., Wattenberg, M., Gilmer, J., Cai, C., Wexler, J., Viegas, F., et~al.
\newblock Interpretability beyond feature attribution: Quantitative testing with concept activation vectors (tcav).
\newblock In \emph{International conference on machine learning}, pp.\  2668--2677. PMLR, 2018.

\bibitem[Kim et~al.(2023)Kim, Jung, Park, Kim, and Yoon]{DBLP:conf/icml/KimJPKY23}
Kim, E., Jung, D., Park, S., Kim, S., and Yoon, S.
\newblock Probabilistic concept bottleneck models.
\newblock In \emph{{ICML}}, volume 202 of \emph{Proceedings of Machine Learning Research}, pp.\  16521--16540. {PMLR}, 2023.

\bibitem[Kingma \& Ba(2015)Kingma and Ba]{DBLP:journals/corr/KingmaB14}
Kingma, D.~P. and Ba, J.
\newblock Adam: {A} method for stochastic optimization.
\newblock In \emph{{ICLR}}, 2015.

\bibitem[Koh et~al.(2020)Koh, Nguyen, Tang, Mussmann, Pierson, Kim, and Liang]{koh2020concept}
Koh, P.~W., Nguyen, T., Tang, Y.~S., Mussmann, S., Pierson, E., Kim, B., and Liang, P.
\newblock Concept bottleneck models.
\newblock In \emph{International conference on machine learning}, pp.\  5338--5348. PMLR, 2020.

\bibitem[Koller(2009)]{koller2009probabilistic}
Koller, D.
\newblock Probabilistic graphical models: Principles and techniques, 2009.

\bibitem[Krizhevsky et~al.(2009)Krizhevsky, Nair, and Hinton]{cifar10}
Krizhevsky, A., Nair, V., and Hinton, G.
\newblock {Learning multiple layers of features from tiny images}.
\newblock Master's thesis, {University of Toronto}, 2009.
\newblock URL \url{http://www.cs.toronto.edu/~kriz/cifar.html}.

\bibitem[Laguna et~al.(2024{\natexlab{a}})Laguna, Marcinkevi{\v{c}}s, Vandenhirtz, and Vogt]{black_box_intervenables}
Laguna, S., Marcinkevi{\v{c}}s, R., Vandenhirtz, M., and Vogt, J.~E.
\newblock Beyond concept bottleneck models: How to make black boxes intervenable?
\newblock \emph{arXiv preprint arXiv:2401.13544}, 2024{\natexlab{a}}.

\bibitem[Laguna et~al.(2024{\natexlab{b}})Laguna, Marcinkevi{\v{c}}s, Vandenhirtz, and Vogt]{laguna2024beyond}
Laguna, S., Marcinkevi{\v{c}}s, R., Vandenhirtz, M., and Vogt, J.~E.
\newblock Beyond concept bottleneck models: How to make black boxes intervenable?
\newblock \emph{arXiv preprint arXiv:2401.13544}, 2024{\natexlab{b}}.

\bibitem[Liu et~al.(2024)Liu, Cao, Zhang, Feng, and An]{DBLP:conf/aistats/LiuCZF024}
Liu, S., Cao, Y., Zhang, Q., Feng, L., and An, B.
\newblock Mitigating underfitting in learning to defer with consistent losses.
\newblock In \emph{{AISTATS}}, volume 238 of \emph{Proceedings of Machine Learning Research}, pp.\  4816--4824. {PMLR}, 2024.

\bibitem[Loshchilov \& Hutter(2019)Loshchilov and Hutter]{DBLP:conf/iclr/LoshchilovH19}
Loshchilov, I. and Hutter, F.
\newblock Decoupled weight decay regularization.
\newblock In \emph{{ICLR}}. OpenReview.net, 2019.

\bibitem[Madras et~al.(2018)Madras, Pitassi, and Zemel]{madras2018predict}
Madras, D., Pitassi, T., and Zemel, R.
\newblock Predict responsibly: improving fairness and accuracy by learning to defer.
\newblock \emph{Advances in neural information processing systems}, 31, 2018.

\bibitem[Mahinpei et~al.(2021)Mahinpei, Clark, Lage, Doshi-Velez, and Pan]{mahinpei2021promises}
Mahinpei, A., Clark, J., Lage, I., Doshi-Velez, F., and Pan, W.
\newblock Promises and pitfalls of black-box concept learning models.
\newblock \emph{arXiv preprint arXiv:2106.13314}, 2021.

\bibitem[Mao et~al.(2024)Mao, Mohri, Mohri, and Zhong]{mao2024two}
Mao, A., Mohri, C., Mohri, M., and Zhong, Y.
\newblock Two-stage learning to defer with multiple experts.
\newblock \emph{Advances in neural information processing systems}, 36, 2024.

\bibitem[Mozannar \& Sontag(2020)Mozannar and Sontag]{DBLP:conf/icml/MozannarS20}
Mozannar, H. and Sontag, D.~A.
\newblock Consistent estimators for learning to defer to an expert.
\newblock In \emph{{ICML}}, volume 119 of \emph{Proceedings of Machine Learning Research}, pp.\  7076--7087. {PMLR}, 2020.

\bibitem[Mozannar et~al.(2023)Mozannar, Lang, Wei, Sattigeri, Das, and Sontag]{DBLP:conf/aistats/MozannarLWSDS23}
Mozannar, H., Lang, H., Wei, D., Sattigeri, P., Das, S., and Sontag, D.~A.
\newblock Who should predict? exact algorithms for learning to defer to humans.
\newblock In \emph{{AISTATS}}, volume 206 of \emph{Proceedings of Machine Learning Research}, pp.\  10520--10545. {PMLR}, 2023.

\bibitem[Oikarinen et~al.(2023)Oikarinen, Das, Nguyen, and Weng]{DBLP:conf/iclr/OikarinenDNW23}
Oikarinen, T.~P., Das, S., Nguyen, L.~M., and Weng, T.
\newblock Label-free concept bottleneck models.
\newblock In \emph{{ICLR}}. OpenReview.net, 2023.

\bibitem[Okati et~al.(2021)Okati, De, and Gomez{-}Rodriguez]{DBLP:conf/nips/OkatiDG21}
Okati, N., De, A., and Gomez{-}Rodriguez, M.
\newblock Differentiable learning under triage.
\newblock In \emph{NeurIPS}, pp.\  9140--9151, 2021.

\bibitem[Palomba et~al.(2024)Palomba, Pugnana, {\'{A}}lvarez, and Ruggieri]{DBLP:journals/corr/abs-2405-18902}
Palomba, F., Pugnana, A., {\'{A}}lvarez, J.~M., and Ruggieri, S.
\newblock A causal framework for evaluating deferring systems.
\newblock \emph{CoRR}, abs/2405.18902, 2024.

\bibitem[Peterson et~al.(2019)Peterson, Battleday, Griffiths, and Russakovsky]{Peterson2019HumanUM}
Peterson, J.~C., Battleday, R.~M., Griffiths, T.~L., and Russakovsky, O.
\newblock Human uncertainty makes classification more robust.
\newblock \emph{2019 IEEE/CVF International Conference on Computer Vision (ICCV)}, pp.\  9616--9625, 2019.
\newblock URL \url{https://api.semanticscholar.org/CorpusID:201103726}.

\bibitem[Raman et~al.(2024)Raman, Zarlenga, and Jamnik]{inter_concept_relationships}
Raman, N., Zarlenga, M.~E., and Jamnik, M.
\newblock Understanding inter-concept relationships in concept-based models.
\newblock \emph{arXiv preprint arXiv:2405.18217}, 2024.

\bibitem[Rastogi et~al.(2022)Rastogi, Zhang, Wei, Varshney, Dhurandhar, and Tomsett]{DBLP:journals/pacmhci/RastogiZWVDT22}
Rastogi, C., Zhang, Y., Wei, D., Varshney, K.~R., Dhurandhar, A., and Tomsett, R.
\newblock Deciding fast and slow: The role of cognitive biases in ai-assisted decision-making.
\newblock \emph{Proc. {ACM} Hum. Comput. Interact.}, 6\penalty0 ({CSCW1}):\penalty0 83:1--83:22, 2022.

\bibitem[Sheth \& Kahou(2023)Sheth and Kahou]{DBLP:conf/nips/ShethK23}
Sheth, I. and Kahou, S.~E.
\newblock Auxiliary losses for learning generalizable concept-based models.
\newblock In \emph{NeurIPS}, 2023.

\bibitem[Shin et~al.(2023)Shin, Jo, Ahn, and Lee]{DBLP:conf/icml/ShinJAL23}
Shin, S., Jo, Y., Ahn, S., and Lee, N.
\newblock A closer look at the intervention procedure of concept bottleneck models.
\newblock In \emph{{ICML}}, volume 202 of \emph{Proceedings of Machine Learning Research}, pp.\  31504--31520. {PMLR}, 2023.

\bibitem[Steinmann et~al.(2024)Steinmann, Stammer, Friedrich, and Kersting]{DBLP:conf/icml/SteinmannSFK24}
Steinmann, D., Stammer, W., Friedrich, F., and Kersting, K.
\newblock Learning to intervene on concept bottlenecks.
\newblock In \emph{{ICML}}. OpenReview.net, 2024.

\bibitem[Vandenhirtz et~al.(2024)Vandenhirtz, Laguna, Marcinkevi{\v{c}}s, and Vogt]{stochastic_cbms}
Vandenhirtz, M., Laguna, S., Marcinkevi{\v{c}}s, R., and Vogt, J.~E.
\newblock Stochastic concept bottleneck models.
\newblock \emph{arXiv preprint arXiv:2406.19272}, 2024.

\bibitem[Verma \& Nalisnick(2022)Verma and Nalisnick]{DBLP:conf/icml/VermaN22}
Verma, R. and Nalisnick, E.~T.
\newblock Calibrated learning to defer with one-vs-all classifiers.
\newblock In \emph{{ICML}}, volume 162 of \emph{Proceedings of Machine Learning Research}, pp.\  22184--22202. {PMLR}, 2022.

\bibitem[Verma et~al.(2023)Verma, Barrej{\'{o}}n, and Nalisnick]{DBLP:conf/aistats/VermaBN23}
Verma, R., Barrej{\'{o}}n, D., and Nalisnick, E.~T.
\newblock Learning to defer to multiple experts: Consistent surrogate losses, confidence calibration, and conformal ensembles.
\newblock In \emph{{AISTATS}}, volume 206 of \emph{Proceedings of Machine Learning Research}, pp.\  11415--11434. {PMLR}, 2023.

\bibitem[Wah et~al.(2011)Wah, Branson, Welinder, Perona, and Belongie]{cub}
Wah, C., Branson, S., Welinder, P., Perona, P., and Belongie, S.
\newblock The caltech-ucsd birds-200-2011 dataset.
\newblock Technical Report CNS-TR-2011-001, California Institute of Technology, 2011.

\bibitem[Wei et~al.(2024)Wei, Cao, and Feng]{DBLP:conf/icml/WeiC024}
Wei, Z., Cao, Y., and Feng, L.
\newblock Exploiting human-ai dependence for learning to defer.
\newblock In \emph{{ICML}}. OpenReview.net, 2024.

\bibitem[Yeh et~al.(2020)Yeh, Kim, Arik, Li, Pfister, and Ravikumar]{yeh2020completeness}
Yeh, C.-K., Kim, B., Arik, S., Li, C.-L., Pfister, T., and Ravikumar, P.
\newblock On completeness-aware concept-based explanations in deep neural networks.
\newblock \emph{Advances in neural information processing systems}, 33:\penalty0 20554--20565, 2020.

\bibitem[Zagoruyko \& Komodakis(2016)Zagoruyko and Komodakis]{DBLP:conf/bmvc/ZagoruykoK16}
Zagoruyko, S. and Komodakis, N.
\newblock Wide residual networks.
\newblock In \emph{{BMVC}}. {BMVA} Press, 2016.

\bibitem[Zarlenga et~al.(2022)Zarlenga, Barbiero, Ciravegna, Marra, Giannini, Diligenti, Shams, Precioso, Melacci, Weller, Li{\'{o}}, and Jamnik]{DBLP:conf/nips/ZarlengaBCMGDSP22}
Zarlenga, M.~E., Barbiero, P., Ciravegna, G., Marra, G., Giannini, F., Diligenti, M., Shams, Z., Precioso, F., Melacci, S., Weller, A., Li{\'{o}}, P., and Jamnik, M.
\newblock Concept embedding models: Beyond the accuracy-explainability trade-off.
\newblock In \emph{NeurIPS}, 2022.

\end{thebibliography}
\bibliographystyle{icml2025}

\appendix
\onecolumn
\section{Proofs}%
\label{app:proofs}

\subsection{Proposition 3.1 --- Maximum Likelihood of DCBM}%
\label{app:proofloss}

In this proof, we report the derivation of the maximum likelihood of the Bayesian Network corresponding to our Deferring Concept Bottleneck Model (DCBM), which we reported in \Cref{fig:cbndefer}.
We assume that our dataset is composed of i.i.d. samples from the joint distribution of the observable variables.
Therefore, we consider the input data~$\vec{x}\in\dom{\set{X}}$, the concept values~$\vec{c}\in\dom{\set{C}}$, the task values~$\vec{y}\in\dom{\set{Y}}$, and the human annotations on both concepts and tasks~$\vec{h}\in\dom{\set{H}}$.
We first define the likelihood of the data by marginalizing over the latent variables, i.e., of each concept model $M_C$ and task model $M_Y$ for all variables $C\in\set{C}$ and $Y\in\set{Y}$.
We recall that for each concept model $M_C$ we have one out of $n_C+1$ possible outcomes, where $n_C$ is the number of possible realizations of $C$ and the additional value accounts for the deferred decision, as in $M_C=\bot$.
Similarly, each task model $M_Y$ has $n_Y+1$ possible outcomes.
We then marginalize one variable at a time from the joint likelihood, starting from an arbitrary task variable $Y\in\set{Y}$.
\begin{align}
\mathcal{L}(\set{\theta} \mid \vec{x},\vec{c},\vec{y},\vec{h})
&=
p(\vec{x},\vec{c},\vec{y},\vec{h} \mid \set{\theta})
\\
&=
\sum_{k\in[n_Y+1]}
p(\vec{x},\vec{c},\vec{y},\vec{h}, M_Y = k \mid \set{\theta})\\
&=
p(\vec{x})
\sum_{k\in[n_Y+1]}
p(\vec{c},\vec{y},\vec{h}, M_Y = k \mid \vec{x}, \set{\theta})\\
\begin{split}
=
p(\vec{x})
\sum_{k\in[n_Y+1]}
p(Y = y \mid M_Y = k, H_Y = h_Y)
p(M_Y = k \mid \vec{c}, \theta_Y)
p(H_Y = h_Y \mid \vec{x}, \vec{c})\\
\cdot\quad
p(\vec{c},\vec{y}_{\setminus Y},\vec{h}_{\setminus Y} \mid \vec{x}, \set{\theta}_{\setminus \theta_Y}),
\end{split}\\
\begin{split}
=
p(\vec{x})
p(H_Y = h_Y \mid \vec{x}, \vec{c})
\sum_{k\in[n_Y+1]}
p(Y = y \mid M_Y = k, H_Y = h_Y)
p(M_Y = k \mid \vec{c}, \theta_Y)
\\
\cdot\quad
p(\vec{c},\vec{y}_{\setminus Y},\vec{h}_{\setminus Y} \mid \vec{x}, \set{\theta}_{\setminus \theta_Y}),
\end{split}
\end{align}
where we employ the operator $\setminus$ to denote the removal from a set of a variable.

Before marginalizing the remaining variables, we focus on the sum over possible values of the model $M_Y$. We can further decompose it by considering whether the model value is a possible value for $Y$ or a deferral~$\bot$. According to our definition of a deferring system, $Y=y$ if and only if the model has value $M_Y=y$ or it deferred the decision through $M_Y=\bot$ but for the human expert holds $H_Y=y$.
Therefore, it holds that $p(Y = y \mid M_Y = k, H_Y = h_y) = 1$ if and only if $M_Y = y$ whenever $M_Y\neq\bot$ and $p(Y = y \mid M_Y = \bot, H_Y = h_y) = 1$ if and only if $h_Y = y$ whenever $M_Y = \bot$. Formally,
\begin{align}
&\sum_{k\in[n_Y+1]}
p(Y = y \mid M_Y = k, h_Y)
p(M_Y = k \mid \vec{c}, \theta_Y)
\\
=
&\sum_{k\in[n_Y]}
p(Y = y \mid M_Y = k, h_Y)
p(M_Y = k \mid \vec{c}, \theta_Y)
+
p(Y = y \mid M_Y = \bot, h_Y)
p(M_Y = \bot \mid \vec{c}, \theta_Y)\\
=
&\sum_{k\in[n_Y]}
\mathbb{I}[y = k]
p(M_Y = k \mid \vec{c}, \theta_Y)
+
\mathbb{I}[h_Y = y]
p(M_Y = \bot \mid \vec{c}, \theta_Y)\\
=
&p(M_Y = y \mid \vec{c}, \theta_Y)
+
\mathbb{I}[h_Y = y]
p(M_Y = \bot \mid \vec{c}, \theta_Y)
\end{align}
where $\mathbb{I}[\cdot]$ is the indicator function taking value one if the proposition is true, zero otherwise.

Therefore, we can apply the same decomposition to all tasks~$\set{Y}$ and rearrange terms as follows.
\begin{align}
\mathcal{L}(\set{\theta} \mid \vec{x},\vec{c},\vec{y},\vec{h})
\\
=
&p(\vec{x})
p(h_Y \mid \vec{x}, \vec{c})
\bigl(
p(M_Y = y \mid \vec{c}, \theta_Y)
+
\mathbb{I}[h_Y = y]
p(M_Y = \bot \mid \vec{c}, \theta_Y)
\bigr)
p(\vec{c},\vec{y}_{\setminus Y},\vec{h}_{\setminus Y} \mid \vec{x}, \set{\theta}_{\setminus \theta_Y})\\
=
&p(\vec{x})
p(h_Y \mid \vec{x}, \vec{c})
\prod_{Y\in\set{Y}}
\bigl(
p(M_Y = y \mid \vec{c}, \theta_Y)
+
\mathbb{I}[h_Y = y]
p(M_Y = \bot \mid \vec{c}, \theta_Y)
\bigr)
p(\vec{c},\vec{h}_{\set{C}} \mid \vec{x}, \set{\theta}_{\set{C}})\\
=
&p(\vec{x})
p(h_Y \mid \vec{x}, \vec{c})
p(\vec{c},\vec{h}_{\set{C}} \mid \vec{x}, \set{\theta}_{\set{C}})
\prod_{Y\in\set{Y}}
\bigl(
p(M_Y = y \mid \vec{c}, \theta_Y)
+
\mathbb{I}[h_Y = y]
p(M_Y = \bot \mid \vec{c}, \theta_Y)
\bigr).
\end{align}

Then, we can apply a similar decomposition to concepts, starting for an arbitrary concept $C\in\set{C}$.
\begin{align}
    &p(\vec{c},\vec{h}_{\set{C}} \mid \vec{x},\theta_{\set{C}})\\
    =
    &\sum_{k\in[n_C+1]}
    p(\vec{c},\vec{h}_{\set{C}},M_C=c \mid \vec{x},\theta_{\set{C}})\\
    =
    &
    p(h_C \mid\vec{x})
    \sum_{k\in[n_C+1]}
    p(C=c\mid M_C=k, h_C)P(M_C=c\mid \vec{x},\theta_C) \cdot p(\vec{c}_{\setminus C}, h_{\set{C}\setminus C}\mid \vec{x}, \theta_{\set{C}\setminus C})\\
    =
    &
    p(h_C \mid\vec{x})
    \bigl(
    p(M_C = c \mid \vec{x}, \theta_C)
    +
    \mathbb{I}[h_C = c]
    p(M_C = \bot \mid \vec{x}, \theta_C).
    \bigr)
    \cdot p(\vec{c}_{\setminus C}, h_{\set{C}\setminus C}\mid \vec{x}, \theta_{\set{C}\setminus C})\\
    =
    &
    p(h_{\set{C}} \mid\vec{x})
    \prod_{C\in\set{C}}
    \bigl(
    p(M_C = c \mid \vec{x}, \theta_C)
    +
    \mathbb{I}[h_C = c]
    p(M_C = \bot \mid \vec{x}, \theta_C)
    \bigr).
\end{align}

Finally, leading to the following form, which we further simplify by denoting as $\vec{z}_V$ the input of each variable $V\in\set{V}$ and as $v\in\dom{V}$ its realization in the dataset.
\begin{align}
\mathcal{L}(\set{\theta} \mid \vec{x},\vec{c},\vec{y},\vec{h})\\
\begin{split}
=p(\vec{x})
p(\vec{h}_{\set{C}}\mid\vec{x})
p(\vec{h}_{\set{Y}}\mid\vec{x},\vec{c})
\prod_{C\in\set{C}}
\left(
p(M_C = c \mid \vec{x}, \theta_C)
+
\mathbb{I}[h_C = c]
p(M_C = \bot \mid \vec{x}, \theta_C)
\right)\\
\cdot\prod_{Y\in\set{Y}}
\left(
p(M_Y = v \mid \vec{c}, \theta_Y)
+
\mathbb{I}[h_Y = y]
p(M_Y = \bot \mid \vec{c}, \theta_Y).
\right)
\end{split}
\\
&=
p(\vec{x})
p(\vec{h}\mid\vec{x},\vec{c})
\prod_{V\in\set{V}}
\left(
p(M_V = v \mid \vec{z}_V, \theta)
+
\mathbb{I}[h_V = v]
p(M_V = \bot \mid \vec{z}_V, \theta_V).
\right)
\end{align}

Finally,
we show that maximizing the likelihood
equates to minimizing the loss function
we defined in \Cref{sec:method}.
We recall that to this end, we assume to have for each variable $V\in\set{V}$ a machine learning model $g(\cdot; \theta_V)$ that produces $n_V+1$ activations, one for each class and one additional for the defer action.
\begin{align}
\hat{\set{\theta}}
&=
\argmax_{\set{\theta}}
\mathcal{L}(\set{\theta} \mid \vec{x},\vec{c},\vec{y},\vec{h})
\\&=
\argmax_{\set{\theta}}
p(\vec{x})p(\vec{h}\mid\vec{x},\vec{c})
\prod_{V\in\set{V}}
p(M_V = v \mid \vec{z}_V, \theta)
+
\mathbb{I}[h_V = v]
p(M_V = \bot \mid \vec{z}_V, \theta_V)
\\&=
\argmax_{\set{\theta}}
\prod_{V\in\set{V}}
p(M_V = v \mid \vec{z}_V, \theta)
+
\mathbb{I}[h_V = v]
p(M_V = \bot \mid \vec{z}_V, \theta_V)
\\&=
\argmax_{\set{\theta}}
\sum_{V\in\set{V}}
\log(
p(M_V = v \mid \vec{z}_V, \theta)
+
\mathbb{I}[h_V = v]
p(M_V = \bot \mid \vec{z}_V, \theta_V))
\label{eq:preineq}
\\&=
\argmax_{\set{\theta}}
\sum_{V\in\set{V}}
\log(
p(M_V = v \mid \vec{z}_V, \theta))
+
\mathbb{I}[h_V = v]
\log(
p(M_V = \bot \mid \vec{z}_V, \theta_V))
\label{eq:postineq}
\\&=
\argmin_{\set{\theta}}
\sum_{V\in\set{V}}
-\log(
p(M_V = v \mid \vec{z}_V, \theta))
-
\mathbb{I}[h_V = v]
\log(
p(M_V = \bot \mid \vec{z}_V, \theta_V))
\\&=
\argmin_{\set{\theta}}
\sum_{V\in\set{V}}
\Psi(q(\vec{z}_V; \theta_V), v)
+
\mathbb{I}[h_V = v]
\Psi(q(\vec{z}_V; \theta_V), \bot),
\end{align}
where $\Psi(g(\vec{z}_V; \theta_V))$ then corresponds to the standard formulation with the softmax operator, reported in \Cref{tab:liu}. In the same table, we report alternative formulations for the same object from the learning to defer literature. Further, we can justify the transition from \Cref{eq:preineq} to \Cref{eq:postineq} since the following holds
\begin{equation}
\begin{split}
&\log(
p(M_V = v \mid \vec{z}_V, \theta)
+
\mathbb{I}[h_V = v]
p(M_V = \bot \mid \vec{z}_V, \theta_V))\\
\geq
&\log(
p(M_V = v \mid \vec{z}_V, \theta))
+
\mathbb{I}[h_V = v]
\log(
p(M_V = \bot \mid \vec{z}_V, \theta_V)).
\end{split}
\end{equation}

\newpage
\subsection{Regularized Optimization of DCBM}%
\label{app:constrained}

As discussed in \Cref{subsec:dcbm}, we regularize the model to avoid trivially deferring whenever the human is correct.
In this way, we can account for the cost of deferring and relegating it to the most significative cases.
We formalize this intuition by requiring the log-probability of deferring when the human is correct to be smaller then zero.
Formally, we define the following constraint over all variables of the deferring system
\begin{align}
&\forall V\in\set{V}.\quad
\mathbb{E}_{\vec{c,h,x,y}}\left[
\mathbb{I}[h_V=v]
\log P(M_V = \bot \mid \vec{x}, \theta_V)
\right] < 0\\
\iff&\forall V\in\set{V}.\quad
\mathbb{E}_{\vec{c,h,x,y}}\left[
-1 \cdot
\mathbb{I}[h_V=v]
\log P(M_V = \bot \mid \vec{x}, \theta_V)
\right] > 0\\
\iff&\forall V\in\set{V}.\quad
\mathbb{E}_{\vec{c,h,x,y}}\left[
\mathbb{I}[h_V=v]
\Psi(q(\vec{z}_V; \theta_V), \bot)
\right] > 0.
\end{align}

In practice, we treat the constraint as a regularization term controlled by an hyperparameter $\lambda\in\real$. In particular, let
\begin{equation}
g_V(\vec{c},\vec{h},\vec{x}, \vec{y}) =
\mathbb{I}[h_V=v]
\Psi(q(\vec{z}_V; \theta_V), \bot),
\end{equation}
be the value of the constraint on the variable $V\in\set{V}$.
We treat the constrained optimization problem
as the following regularized unconstrained problem.
\begin{align}
&\min_{\set{\theta}}
\mathbb{E}_{\vec{c,h,x,y}}\left[
\ell(\set{\theta}\mid\vec{c,h,x,y})
\right]
- \lambda
\sum_{V\in\set{V}}
\mathbb{E}_{\vec{c,h,x,y}}\left[
g_V(\vec{c},\vec{h},\vec{x}, \vec{y})
\right]\\
=&\min_{\set{\theta}}
\mathbb{E}_{\vec{c,h,x,y}}\left[
\ell(\set{\theta}\mid\vec{c,h,x,y})
\right]
+
\mathbb{E}_{\vec{c,h,x,y}}\left[
- \lambda
\sum_{V\in\set{V}}
g_V(\vec{c},\vec{h},\vec{x}, \vec{y})
\right]\\
=&\min_{\set{\theta}}
\mathbb{E}_{\vec{c,h,x,y}}\left[
\ell(\set{\theta}\mid\vec{c,h,x,y})
- \lambda
\sum_{V\in\set{V}}
g_V(\vec{c},\vec{h},\vec{x}, \vec{y})
\right]\\
=&\min_{\set{\theta}}
\mathbb{E}_{\vec{c,h,x,y}}\left[
\sum_{V\in\set{V}}
\Psi(q(\vec{z}_V; \theta_V), v)
+
\mathbb{I}[h_V = v]
\Psi(q(\vec{z}_V; \theta_V), \bot)
-\lambda
\mathbb{I}[h_V = v]
\Psi(q(\vec{z}_V; \theta_V), \bot)
\right]\\
=&\min_{\set{\theta}}
\mathbb{E}_{\vec{c,h,x,y}}\left[
\sum_{V\in\set{V}}
\Psi(q(\vec{z}_V; \theta_V), v)
+ (1 - \lambda)
\mathbb{I}[h_V = v]
\Psi(q(\vec{z}_V; \theta_V), \bot)
\right].
\end{align}

Further, we show that the formulation from \citet{DBLP:conf/aistats/LiuCZF024} arises when explicitly constraining the model to avoid deferring whenever the human is incorrect in the training distribution as in ${\mathbb{E}\left[\mathbb{I}[h_V\neq V]P(M_V\neq\bot\mit\vec{x},\theta_V)\right]>0}$. By expressing the constraint in terms of log-probabilities, we get the following result.
\begin{align}
&\forall V\in\set{V}.\quad
\mathbb{E}_{\vec{c,h,x,y}}\left[
\mathbb{I}[h_V \neq v]
\log P(M_V \neq \bot \mid \vec{x}, \theta_V)
\right] > -\epsilon,\\
\iff&\forall V\in\set{V}.\quad
\mathbb{E}_{\vec{c,h,x,y}}\left[
\mathbb{I}[h_V \neq v]
\log
\sum_{k\in [n_V]}
P(M_V =k \mid \vec{x}, \theta_V)
\right] > -\epsilon,\\
\impliedby&\forall V\in\set{V}.\quad
\mathbb{E}_{\vec{c,h,x,y}}\left[
\mathbb{I}[h_V \neq v]
\sum_{k\in [n_V]}
\log P(M_V =k \mid \vec{x}, \theta_V)
\right] > -\epsilon,\\
\iff&\forall V\in\set{V}.\quad
\mathbb{E}_{\vec{c,h,x,y}}\left[
-1 \cdot
\mathbb{I}[h_V \neq v]
\sum_{k\in [n_V]}
\log P(M_V =k \mid \vec{x}, \theta_V)
\right] < \epsilon,\\
\iff&\forall V\in\set{V}.\quad
\mathbb{E}_{\vec{c,h,x,y}}\left[
\mathbb{I}[h_V \neq v]
\sum_{k\in [n_V]}
\Psi(q(\vec{z}_V; \theta_V), v)
\right] < \epsilon,
\end{align}
for a positive threshold $\epsilon>0$.
Consequently, when introducing this constraints with the same penalty $\lambda$ in the optimization problem, we obtain the following formulation
\begin{equation}
\min_{\set{\theta}}
\mathbb{E}_{\vec{c,h,x,y}}\left[
\sum_{V\in\set{V}}
\Psi(q(\vec{z}_V; \theta_V), v)
+ (1 - \lambda)
\mathbb{I}[h_V = v]
\Psi(q(\vec{z}_V; \theta_V), \bot)
+ \lambda
\mathbb{I}[h_V \neq v]
\sum_{k\in[n_V]}
\Psi(q(\vec{z}_V; \theta_V), k)
\right].
\end{equation}

\begin{table*}
\begin{center}
\resizebox{\textwidth}{!}{
\begin{tabular}{c|c}
    Loss Name & Loss Function \\
    \toprule
    CE~\citep{DBLP:conf/aistats/MozannarLWSDS23} & $\psi\left(q(z), k\right) = -\log\left(\frac{\exp\left(q(z)_k\right)}{\sum_{k'\in [K+1]}\exp\left(q(z)_{k'}\right)}\right)$ \\
    \midrule
    \\
    OVA~\citep{DBLP:conf/aistats/VermaBN23} & $\psi\left(q(z), k\right)=\begin{cases}
        \log\left(1+\exp\left( -q(z)_k\right)\right)-\log\left(1+\exp\left( +q(z)_k\right)\right)
        &\text{if } k=\perp \\
        \log\left(1+\exp\left( -q(z)_k\right)\right)+\sum_{k'\in [K+1]/\{k\}}\log\left(1+\exp\left( +q(z)_{k'}\right)\right)
        &\text{otherwise}
    \end{cases}$\\
    \midrule
    \\
    ASM~\citep{DBLP:conf/nips/CaoM0W023} & $\psi\left(q(z), k\right) = \begin{cases}
     -\log\left(\frac{\exp\left(q(z)_k\right)}{\sum_{k'\in [K]}\exp\left(q(z)_{k'}\right) - \max_{k'\in [K]}\exp\left(q(z)_{k'}\right) }\right) &\text{if } k=\perp\\
     -\log\left(\frac{\exp\left(q(z)_k\right)}{\sum_{k'\in [K]}(\exp\left(q(z)_{k'}\right)}\right)-\log\left(\frac{\sum_{k'\in [K]}\exp\left(q(z)_{k'}\right) - \max_{k'\in [K]}\exp\left(q(z)_{k'}\right)}{\sum_{k'\in [K+1]}\exp\left(q(z)_{k'}\right) - \max_{k'\in [K]}\exp\left(q(z)_{k'}\right)}\right) &\text{otherwise}
     \end{cases}$ \\
\end{tabular}
}
\end{center}
\caption{Multiclass losses from \citet{DBLP:conf/aistats/LiuCZF024}.}%
\label{tab:liu}
\end{table*}

\subsection{Lemma 3.2 --- Sum of Consistent Losses}%
\label{app:proofsum}

Before proving \cref{lemma:sumconsistent}, we prove the following result that is a fundamental property arising from the definition of the argmin function and the independence of variables.

\begin{lemma}
\label{lem:argmin}
    Given $f,g:A\subseteq\mathbb{R}^n\to \mathbb{R}$, we have
    \[
\argmin_{(x,y)\in A^2} \left(f(x)+g(y)\right)=\{(\bar{x},\bar{y})\in A^2:\ \bar{x}\in\argmin_{x\in A} f(x), \bar{y}\in \argmin_{y\in A} g(y)\}
    \]
    \begin{proof}
    For the sake of simplicity we use the following shortcut:
    \[
    L = \argmin_{(x,y)\in A^2} \left(f(x)+g(y)\right) \quad\mbox{and}\quad R = \{(\bar{x},\bar{y})\in A^2:\ \bar{x}\in\argmin_{x\in A} f(x)\wedge \bar{y}\in \argmin_{y\in A} g(y)\}
    \]
First, we notice that in case any between $f$ or $g$ has no minimum in $A$, then the claim is trivially proved as $L=R=\emptyset$. Indeed, let us assume, e.g., that $f$ has no minimum in $A$, then clearly $R=\emptyset$. Moreover, also $L=\emptyset$. Indeed, if we assume by contradiction that $L\neq\emptyset$, then it exists $(\bar{x},\bar{y})\in L$, i.e.  $(\bar{x},\bar{y})\in A^2$ with $f(\bar{x})+g(\bar{y})\leq f(x)+g(y)$ for every $(x,y)\in A^2$. By taking $y=\bar{y}$ and canceling $g(\bar{y})$ on both sides we get that $f(\bar{x})\leq f(x)$ for every $x\in A$. Therefore  $f$ has at least a minimum ($\bar{x}$) in $A$, which is a contradiction, so it must be $L=\emptyset$, as well.

So lets consider the case of both $L\neq\emptyset$ and $R\neq\emptyset$. We show the double inclusion.

    \begin{enumerate}
      \item If $(\bar{x},\bar{y})\in L$ then $f(\bar{x})+g(\bar{y})\leq f(x)+g(y)$ for every $(x,y)\in A^2$. From this inequality, by taking $x=\bar{x}$ and canceling $f(\bar{x})$, we get $\bar{y}\in\argmin_{y\in A}g(y)$. Identically, by taking $y=\bar{y}$, we get $\bar{x}\in\argmin_{x\in A}f(x)$. Therefore $(\bar{x},\bar{y})\in R$.
            \item If $(\bar{x},\bar{y})\in R$ then $\bar{x}\in\argmin_{x\in A}f(x)$ and $\bar{y}\in\argmin_{y\in A}g(y)$. Namely, $f(\bar{x})\leq f(x)$ for every $x\in A$ and $g(\bar{y})\leq g(y)$ for every $y\in A$. By summing on both sides, we get $f(\bar{x})+g(\bar{y})\leq f(x)+g(y)$ for every $(x,y)\in A^2$, and so $(\bar{x},\bar{y})\in L$
    \end{enumerate}
    \end{proof}
\end{lemma}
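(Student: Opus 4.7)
The plan is to reduce the problem to a componentwise application of the definition of consistent surrogate, by first extending the two-function argmin decomposition (\Cref{lem:argmin}) to an $m$-fold version, and then combining it with the hypothesis $\argmin \ell'_i \subseteq \argmin \ell_i$ coordinate by coordinate. The key structural fact that makes this work is that, in both $\ell'$ and $\ell$, the $i$-th summand depends only on the $i$-th block of variables $\theta_i \in \mathbb{R}^{n_i}$, so the summands are functions of disjoint coordinates.

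First, I would prove the $m$-fold analogue of \Cref{lem:argmin}: given functions $f_i : \mathbb{R}^{n_i} \to \mathbb{R}$ for $i=1,\ldots,m$,
\begin{equation*}
\argmin_{(\theta_1,\ldots,\theta_m)}\sum_{i=1}^m f_i(\theta_i)
=
\argmin_{\theta_1} f_1(\theta_1)\ \times\ \cdots\ \times\ \argmin_{\theta_m} f_m(\theta_m).
\end{equation*}
This follows by induction on $m$: the base case $m=2$ is exactly \Cref{lem:argmin}, and the inductive step groups the first $m-1$ summands into a single function of $(\theta_1,\ldots,\theta_{m-1})$ and applies \Cref{lem:argmin} again, together with the inductive hypothesis. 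As in the base case, if any single $\argmin f_i$ is empty, both sides are empty and the equality is trivial, so no additional hypothesis (compactness, continuity, etc.) is needed.

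Next, I apply this decomposition to both $\ell'$ and $\ell$, obtaining
\begin{equation*}
\argmin \ell'=\prod_{i=1}^m \argmin \ell'_i
\qquad\text{and}\qquad
\argmin \ell=\prod_{i=1}^m \argmin \ell_i.
\end{equation*}
By hypothesis, $\ell'_i$ is a consistent surrogate of $\ell_i$ for every $i$, which, by the definition stated in the footnote of \Cref{sec:background}, means $\argmin \ell'_i \subseteq \argmin \ell_i$. Since Cartesian products preserve inclusions componentwise, it follows that
\begin{equation*}
\argmin \ell'=\prod_{i=1}^m \argmin \ell'_i \ \subseteq\ \prod_{i=1}^m \argmin \ell_i=\argmin \ell,
\end{equation*}
which is exactly the definition of $\ell'$ being a consistent surrogate of $\ell$.

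The only delicate step is the $m$-fold argmin decomposition; everything else is either a direct application of the hypothesis or set-theoretic bookkeeping. I expect this to be the main obstacle only in the sense that one has to be careful to handle the possibility of empty argmins (already handled in \Cref{lem:argmin}) and to correctly carry the induction without assuming any extra regularity on the $f_i$. No other technical assumptions are needed because the coordinate blocks $\theta_i$ are genuinely disjoint, which is precisely what the disjointness hypothesis discussed in \Cref{subsec:pretrain} ensures in the DCBM application.
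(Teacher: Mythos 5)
There is a genuine gap here, and it is structural rather than technical: your proposal does not prove the statement in question. The statement is \Cref{lem:argmin} itself --- the fact that for two functions $f,g$ of disjoint variables, $\argmin_{(x,y)\in A^2}\left(f(x)+g(y)\right)$ equals the Cartesian product of $\argmin_{x\in A} f(x)$ and $\argmin_{y\in A} g(y)$. Your argument instead proves \Cref{lemma:sumconsistent} (the consistency of the sum of consistent surrogates), and it does so by explicitly invoking \Cref{lem:argmin} as the base case $m=2$ of your $m$-fold decomposition. In other words, the one piece of mathematics you were asked to establish is precisely the piece you take as given; everything you actually argue (the induction on $m$, the componentwise product of inclusions $\argmin \ell'_i \subseteq \argmin \ell_i$) lives one level above the target lemma.

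What is missing is the double-inclusion argument itself: (i) if $(\bar{x},\bar{y})$ minimizes $f(x)+g(y)$ over $A^2$, then taking $x=\bar{x}$ in the minimality inequality and cancelling $f(\bar{x})$ shows that $\bar{y}$ minimizes $g$, and symmetrically for $\bar{x}$; (ii) conversely, if $\bar{x}$ and $\bar{y}$ are separate minimizers, summing the inequalities $f(\bar{x})\leq f(x)$ and $g(\bar{y})\leq g(y)$ gives joint minimality; and (iii) the degenerate case in which $f$ or $g$ has no minimizer in $A$, where one must check that both sides of the claimed set equality are empty --- the paper handles this by observing that a joint minimizer would induce a minimizer of $f$ via the same cancellation trick, a contradiction. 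None of these steps appears in your proposal. The induction to $m$ blocks and the product-of-inclusions bookkeeping you describe are correct and match what the paper does in its proof of \Cref{lemma:sumconsistent}, but as a proof of \Cref{lem:argmin} the submission is circular.
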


\textbf{\cref{lemma:sumconsistent}}
Let $\ell^\prime_1, \ell_1, \cdots,\ell^\prime_m, \ell_m$  be (possibly distinct) loss functions. Assume that, for every $i\in\{1,\ldots,m\}$, $\ell^\prime_i,\ell_i:\mathbb{R}^{n_i}\to\mathbb{R}$, being $\ell'_i$ a consistent surrogate of $\ell_i$.
Then $\ell^\prime:\mathbb{R}^{n}\to\mathbb{R}$, with $n=n_1+\ldots+n_m$ and 
$\ell'(\theta_1,\ldots,\theta_m)=\sum_{i=1}^m\ell^\prime_i(\theta_i)$
     is a consistent surrogate of
$\ell:\mathbb{R}^{n}\to\mathbb{R}$, with $\ell(\theta_1,\ldots,\theta_m)=\sum_{i=1}^m\ell(\theta_i)$.
 \begin{proof}
 The proof is a direct consequence of \Cref{lem:argmin}. For simplicity, we show the complete proof for $m=2$.
 To be precise, from the statement we report explicitly that, $\ell'_1,\ell_1:\mathbb{R}^{n_1}\to \mathbb{R}$, $\ell'_2,\ell_2:\mathbb{R}^{n_2}\to \mathbb{R}$  and $\ell',\ell:\mathbb{R}^{n}\to \mathbb{R}$ with $n=n_1+n_2$,  $\ell'=\ell'_1+\ell'_2$ and $\ell=\ell_1+\ell_2$. We have to prove that $\ell'$ is a consistent surrogate of $\ell$, namely that $\argmin_{\theta\in\mathbb{R}^n} \ell'(\theta)\subseteq \argmin_{\theta\in\mathbb{R}^n} \ell(\theta)$.

 Let $\theta^*=(\theta^*_1,\theta^*_2)\in\mathbb{R}^{n_1+n_2}$ be a minimum of $\ell'$ (the claim would be trivial in case  $\ell'$ has no minima). Then according to \Cref{lem:argmin}, we have:
 \[
 \theta^* \in\argmin_{\theta\in\mathbb{R}^n}\ell'(\theta)=\argmin_{(\theta_1,\theta_2)\in\mathbb{R}^{n_1+n_2}}\left(\ell'_1(\theta_1)+\ell'_2(\theta_2)\right) = \{(\bar{\theta}_1,\bar{\theta}_2)\in\mathbb{R}^{n_1+n_2}:\bar{\theta}_1\in\argmin_{\theta_1\in\mathbb{R}^{n_1}}\ell'_1(\theta_1) \wedge \bar{\theta}_2\in\argmin_{\theta_2\in\mathbb{R}^{n_2}}\ell'_2(\theta_2)\}
 \]
Therefore $\theta_1^*\in\argmin_{\theta_1\in\mathbb{R}^{n_1}}\ell'_1(\theta_1)$ and $\theta_2^*\in\argmin_{\theta_2\in\mathbb{R}^{n_2}}\ell'_2(\theta_2)$.
Since by hypothesis $\ell'_1,\ell'_2$ are consistent surrogates of $\ell_1,\ell_2$, respectively, it follows that:
$\theta_1^*\in\argmin_{\theta_1\in\mathbb{R}^{n_1}}\ell_1(\theta_1)$ and $\theta_2^*\in\argmin_{\theta_2\in\mathbb{R}^{n_2}}\ell_2(\theta_2)$.

Finally, the proof concludes by using again \Cref{lem:argmin}:
\[
 \theta^*\in\{(\bar{\theta}_1,\bar{\theta}_2)\in\mathbb{R}^{n_1+n_2}:\bar{\theta}_1\in\argmin_{\theta_1\in\mathbb{R}^{n_1}}\ell_1(\theta_1) \wedge \bar{\theta}_2\in\argmin_{\theta_2\in\mathbb{R}^{n_2}}\ell_2(\theta_2)\}=\argmin_{(\theta_1,\theta_2)\in\mathbb{R}^{n_1+n_2}}\left(\ell_1(\theta_1)+\ell_2(\theta_2)\right)= \argmin_{\theta\in\mathbb{R}^n}\ell(\theta)
 \]

 \end{proof}

\subsection{Theorem 1 --- Sum of Consistent Losses}%
\label{app:proofmain}

\begin{proof}
    By the previous \cref{lemma:sumconsistent},
    the sum of consistent losses is consistent to the sum of the target loss functions.
    It is thus immediate how this applies our optimization problem both for the unconstrained (Equation \ref{eq:psiloss}) and the penalized (Equation \ref{eq:psilosspenalized}) losses.
    Formally,
    \begin{align}
    \sum_{V\in\set{V}}
    \Psi(q(\vec{z}_V; \theta_V), v)
    + (1 - \lambda)
    \mathbb{I}[h_V = v]
    \Psi(q(\vec{z}_V; \theta_V), \bot)
    \end{align}
    is the sum of losses consistent of the zero-one loss which we reported in Equation \ref{eq:L2D_min} whenever $\lambda=1$. In fact, it corresponds to an equivalent formulation in Theorem 1 from \citet{DBLP:conf/icml/MozannarS20}.
    Similarly, for any other $\lambda\in[0,1]$, the penalized version coincides in the single-variable case to the provably consistent formulation from Equation 4 in \citet{DBLP:conf/aistats/LiuCZF024}. Therefore, the sum over different variables is consistent to the sum of the zero-one loss.
\end{proof}

\section{Experimental Details}%
\label{app:exp_det}

\paragraph{Data Split.}
For the \texttt{completeness} synthetic dataset, we sample $1,000$ instances with an $80\%$-$20\%$ train-test split ratio. For \texttt{cifar10h}, we randomly split the dataset into training, validation and test according to a $70\%, 10\%, 20\%$ ratio. For \texttt{CUB}, we keep the original split.

\paragraph{Concept and Task Predictors}
For each concept predictor $q_C$, we employ a three-layer MLP with a leaky-relu activation function. For the \texttt{completeness} dataset, each concept encoder model takes as input the raw data. For the image datasets \texttt{cifar10-h} and \texttt{cub}, concept predictors take instead as an input the pre-trained embedding discussed in \Cref{subsec:pretrain}. For \texttt{CUB}, we obtain such an embedding by training a ResNet34~\citep{DBLP:conf/cvpr/HeZRS16} for 100 epochs to solve the final task using a cross-entropy loss function. The representations obtained by the pre-trained model are then frozen and used as the input for each concept encoder. For \texttt{cifar10-h} we consider the pre-trained WideResNet~\citep{DBLP:conf/bmvc/ZagoruykoK16} provided by \citet{DBLP:journals/corr/abs-2405-18902}, who trained a WideResNet architecture on the original \texttt{cifar10}~\citep{cifar10} training set for 200 epochs. We use the obtained representations to train all the concept encoders. All the final task classifiers consist of another three-layer MLP taking as input the concept values.

\paragraph{Training Procedure.}
We train every combination of models and defer costs~$\lambda$ for 100 epochs. For \texttt{completeness}, we use \texttt{Adam}~\citep{DBLP:journals/corr/KingmaB14} with a learning rate equal to $.001$ and no scheduler. For both \texttt{cifar10-h} and \texttt{CUB}, we use \texttt{AdamW}~\citep{DBLP:conf/iclr/LoshchilovH19} as an optimizer, setting the initial learning rate to $.001$. We decrease the learning rate every 25 epochs by $.5$. Additionally, for \texttt{CUB}, following \citet{DBLP:conf/nips/ZarlengaBCMGDSP22} guidelines, we consider a weighted version of the loss on concepts to take into account their imbalance. To limit the computational burden, for both \texttt{cifar10-h} and \texttt{CUB}, we perform early stopping after 10 epochs if there is no improvement for the loss on the validation set.

\paragraph{Evaluation.}
All the results are averaged over five and three runs on the synthetic and the other datasets, respectively, with fixed datasets' splits.



\begin{figure*}
    \centering
    \includegraphics[width=0.9\textwidth]{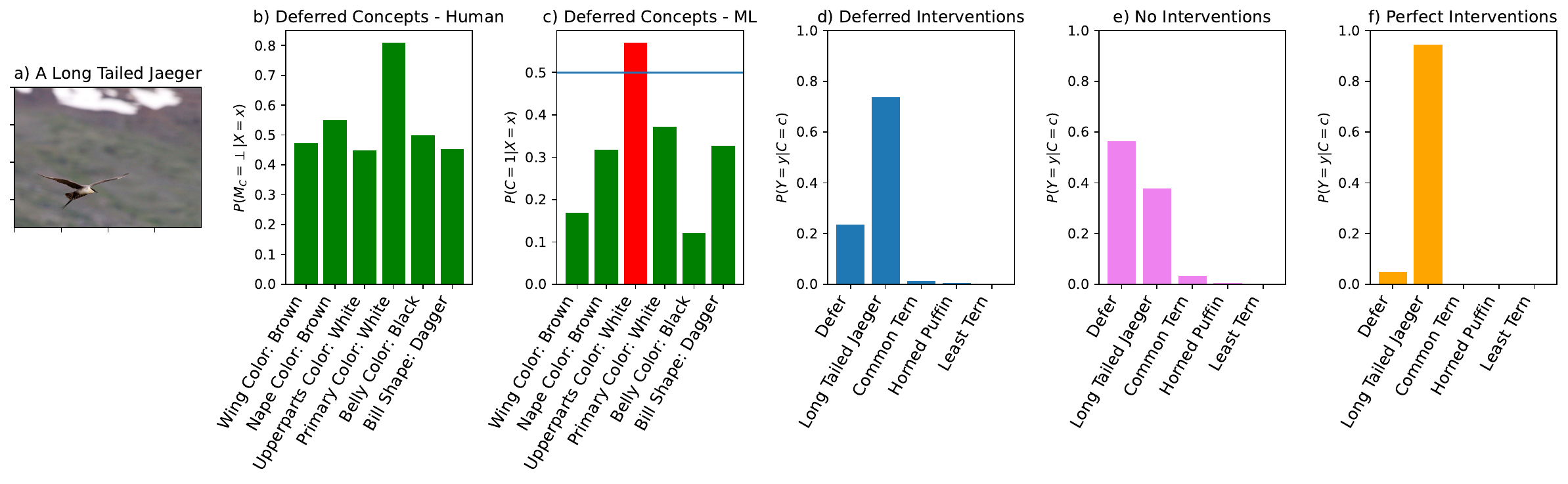}
    \includegraphics[width=0.9\textwidth]{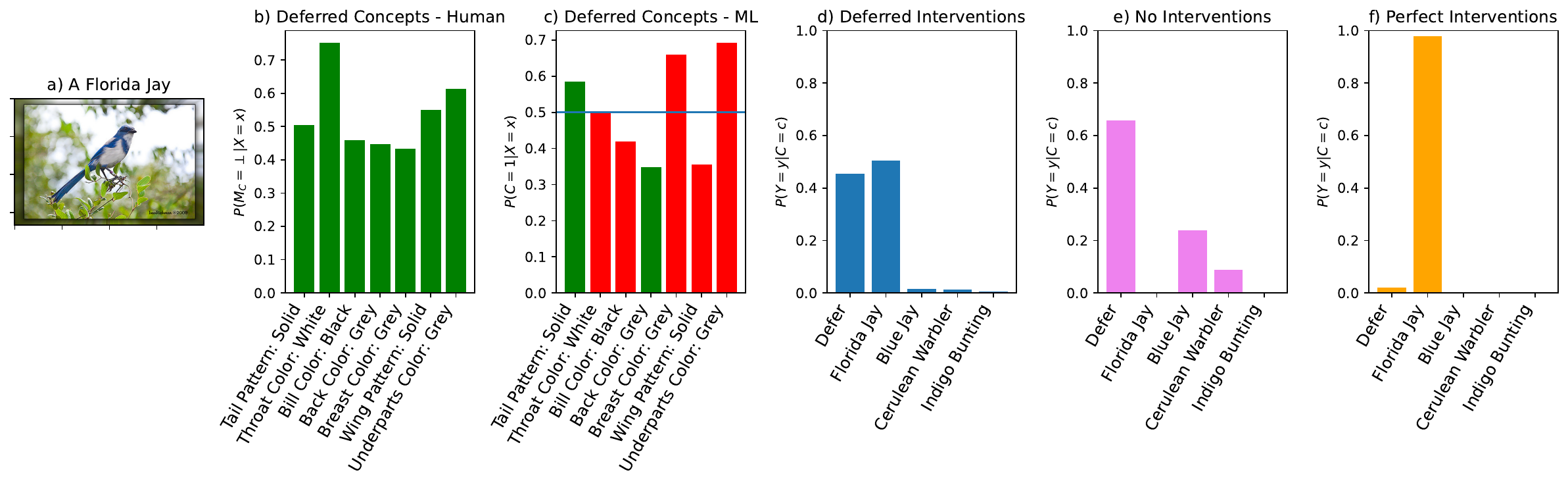}
    \includegraphics[width=0.9\textwidth]{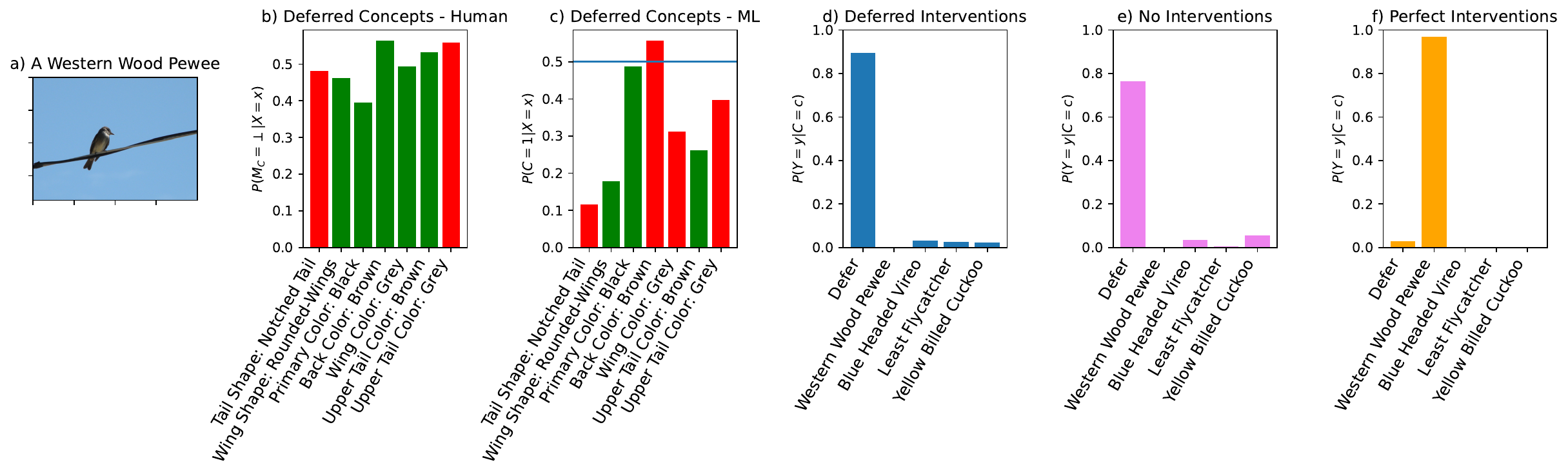}
    \includegraphics[width=0.9\textwidth]{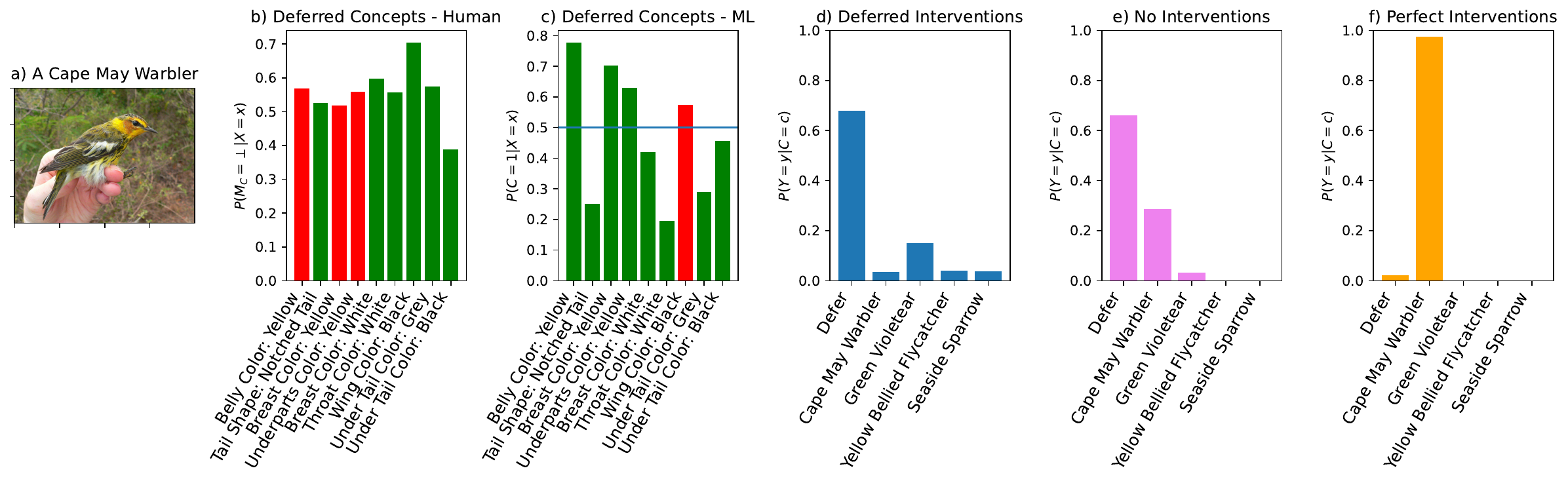}
    \caption{Interpretation of a \DCBM{} with defer cost $\lambda=0.1$ on an input sample.
    From left to right:
    (a.) examples of an image from the \texttt{CUB} dataset;
    (b.) the concepts that the model has deferred with the estimated probability, green bars stand for when the human correctly predicts the concept, red otherwise;
    (c.) the estimated probability of each deferred concept being true according to the machine learning model, green bars stand for when the ML would have correctly predicted the concept, red otherwise;
    (d.) the estimated probability of top-5 final task labels after deferring the concepts to the human (standard \DCBM{} behavior);
    (e.) the estimated probability of top-5 final task labels without deferring the concepts to the human;
    (f.) the estimated probability of top-5 final task labels from the ground-truth concepts;
    }%
    \label{fig:explanationsapp}
\end{figure*}

\section{Additional Explanations}%
\label{app:interpretations}

In \Cref{fig:explanationsapp}, we report additional samples from \texttt{CUB} that we analyze as in the experimental results for research question Q4 in \Cref{subsec:expres}. 
While in the first two rows the effect of deferring on concepts is clearly beneficial, the other two examples require further discussion.

In both cases, we can see that the human would still mispredict a few concepts. Interestingly, such concepts are not visible in the image (e.g., the upper tail in the second-to-last example and the belly colour in the last example).  Therefore, the perfect interventions are to be considered an ideal scenario only, as in practice, these concepts are not directly predictable from the image. Still, \DCBM{} can ``correctly defer'' on the final task as concept interventions would not suffice to disambiguate the correct label.

\section{Additional Results}%
\label{app:additionalresults}

We provide additional results on the synthetic
dataset \texttt{completeness} to investigate the following comparisons:
\begin{itemize}
    \item Shared parameters among the concept encoders,
    \item Different human expert accuracy on both the concepts and task,
    \item Joint vs Independent training,
    \item Different learning-to-defer losses.
\end{itemize}

\paragraph{Label Smoothing.}
\citet{DBLP:conf/aistats/LiuCZF024} studies the problem of label smoothing on learning to defer losses a proposes a slightly different formulation of Equation~\ref{eq:psilosspenalized}, which, once adapted to our notation, we report as follows:
\begin{equation}
\label{eq:psilosspenalized_ls}
\sum_{V \in\set{V}}
\Psi(q(\vec{z}_V;\theta_V), v)\\
+
(1-\lambda) \cdot \mathbb{I}\left[y_V = h_V\right]
\Psi(q(\vec{z}_V; \theta_V), \bot)\\
+
\lambda \cdot \mathbb{I}\left[y_V \neq h_V\right]
\argmin_{k\in[K]}
\Psi(q(\vec{z}_V; \theta_V), k),
\end{equation}
In all the coming experimental results, we use the suffix \texttt{-LS} to refer to the results using Equation~\ref{eq:psilosspenalized_ls}, while we use the suffix \texttt{-NLS} for the formulation in Equation~\ref{eq:psilosspenalized}. As it is shown in the upcoming additional results, we do not observe noteworthy differences in the performance of the two loss functions.

\paragraph{Joint Learning.}
While independent training is required to guarantee the consistency of the learning-to-defer loss function, we implement joint learning to compare empirically.
We implement the joint learning strategy by considering the following soft-labeled concept predictor:
\begin{equation}
\tilde{g}(\vec{x}) =
g_1(\vec{x}) (1-g_\perp(\rvx))
+
h_c (g_\perp(\rvx)),
\end{equation}
where we produce the output as the weighted sum of the human-provided concept~$\psi$ and the machine learning model concept. The weight corresponds to the probability of deferring or not the instance.

First, we study the different negative log-likelihood terms (\Cref{tab:liu}) that can be employed within the learning-to-defer (Equation~\ref{eq:psilosspenalized}) loss function both for both independent~(\Cref{%
tab:frozen-CEoracle-TEoracle-independent-ASM-results,%
tab:frozen-CEoracle-TEoracle-independent-CE-results,%
tab:frozen-CEoracle-TEoracle-independent-OVA-results%
})
and joint learning~(\Cref{%
tab:frozen-CEoracle-TEoracle-joint-ASM-results,%
tab:frozen-CEoracle-TEoracle-joint-CE-results,%
tab:frozen-CEoracle-TEoracle-joint-OVA-results%
}),
when dealing with oracle human experts on both concepts and tasks.
Further, for the ASM loss function, we also study how the model behaves when we do not freeze the parameters of the encoder, also for independent%
~(\Cref{tab:unfrozen-CEoracle-TEoracle-independent-ASM-results})
and joint training%
~(\Cref{tab:unfrozen-CEoracle-TEoracle-joint-ASM-results}).
Finally, we study multiple combinations of human expertize on the concepts and the tasks, whose reference we summarize in the following table:

\begin{center}
\begin{tabular}{l|lccc}
& & \multicolumn{3}{c}{Human Task Expert}\\
& 60 & 80 & oracle\\
\midrule
\multirow{3}{*}{Human Concept Expert}
& 60
&\Cref{%
tab:frozen-CEhuman60-TEhuman60-independent-ASM-results,%
tab:frozen-CEhuman60-TEhuman60-joint-ASM-results,%
}
&\Cref{%
tab:frozen-CEhuman60-TEhuman80-independent-ASM-results,%
tab:frozen-CEhuman60-TEhuman80-joint-ASM-results,%
}
&\Cref{%
tab:frozen-CEhuman60-TEoracle-independent-ASM-results,%
tab:frozen-CEhuman60-TEoracle-joint-ASM-results,%
}\\
& 80
&\Cref{%
tab:frozen-CEhuman80-TEhuman60-independent-ASM-results,%
tab:frozen-CEhuman80-TEhuman60-joint-ASM-results,%
}
&\Cref{%
tab:frozen-CEhuman80-TEhuman80-independent-ASM-results,%
tab:frozen-CEhuman80-TEhuman80-joint-ASM-results,%
}
&\Cref{%
tab:frozen-CEhuman80-TEoracle-independent-ASM-results,%
tab:frozen-CEhuman80-TEoracle-joint-ASM-results,%
}\\
& oracle
&\Cref{%
tab:frozen-CEoracle-TEhuman60-independent-ASM-results,%
tab:frozen-CEoracle-TEhuman60-joint-ASM-results,%
}
&\Cref{%
tab:frozen-CEoracle-TEhuman80-independent-ASM-results,%
tab:frozen-CEoracle-TEhuman80-joint-ASM-results,%
}
&\Cref{%
tab:frozen-CEoracle-TEoracle-independent-ASM-results,%
tab:frozen-CEoracle-TEoracle-joint-ASM-results,%
}
\end{tabular}
\end{center}

\input{assets/tabs/frozen_independent_ASM_CEoracle_TEoracle}
\input{assets/tabs/frozen_joint_ASM_CEoracle_TEoracle}
\input{assets/tabs/frozen_independent_CE_CEoracle_TEoracle}
\input{assets/tabs/frozen_joint_CE_CEoracle_TEoracle}
\input{assets/tabs/frozen_independent_OVA_CEoracle_TEoracle}
\input{assets/tabs/frozen_joint_OVA_CEoracle_TEoracle}

\input{assets/tabs/unfrozen_independent_ASM_CEoracle_TEoracle}
\input{assets/tabs/unfrozen_joint_ASM_CEoracle_TEoracle}

\input{assets/tabs/frozen_independent_ASM_CEhuman80_TEoracle}
\input{assets/tabs/frozen_joint_ASM_CEhuman80_TEoracle}

\input{assets/tabs/frozen_independent_ASM_CEhuman60_TEoracle}
\input{assets/tabs/frozen_joint_ASM_CEhuman60_TEoracle}

\input{assets/tabs/frozen_independent_ASM_CEoracle_TEhuman80}
\input{assets/tabs/frozen_joint_ASM_CEoracle_TEhuman80}

\input{assets/tabs/frozen_independent_ASM_CEhuman80_TEhuman80}
\input{assets/tabs/frozen_joint_ASM_CEhuman80_TEhuman80}

\input{assets/tabs/frozen_independent_ASM_CEhuman60_TEhuman80}
\input{assets/tabs/frozen_joint_ASM_CEhuman60_TEhuman80}

\input{assets/tabs/frozen_independent_ASM_CEoracle_TEhuman60}
\input{assets/tabs/frozen_joint_ASM_CEoracle_TEhuman60}

\input{assets/tabs/frozen_independent_ASM_CEhuman80_TEhuman60}
\input{assets/tabs/frozen_joint_ASM_CEhuman80_TEhuman60}

\input{assets/tabs/frozen_independent_ASM_CEhuman60_TEhuman60}
\input{assets/tabs/frozen_joint_ASM_CEhuman60_TEhuman60}

\end{document}